%% file: main.tex
\documentclass{article}
\usepackage{iclr2027_conference,times}
\iclrfinalcopy
\usepackage[T1]{fontenc}
\usepackage{amsmath,amssymb,amsthm,mathtools}
\usepackage{graphicx,booktabs,microtype}
\usepackage{wrapfig,needspace}
\usepackage{tikz}
\usetikzlibrary{arrows.meta,positioning}
\usepackage{hyperref}
\usepackage{url}
\hypersetup{colorlinks=true,citecolor=blue,linkcolor=blue,urlcolor=blue,pdftitle={On the Limits of Maximal Coding Rate Reduction for Out-of-Distribution Generalisation},pdfauthor={Menghui Zhou, Gaoshan Bi, Vitaveska Lanfranchi, Po Yang}}
\newtheorem{theorem}{Theorem}[section]
\newtheorem{proposition}[theorem]{Proposition}
\newtheorem{lemma}[theorem]{Lemma}

\theoremstyle{definition}

\newcommand{\mcr}{\ensuremath{\mathrm{MCR}^{2}}}
\newcommand{\icr}{\ensuremath{\mathrm{ICR}^{2}}}
\newcommand{\E}{\mathbb{E}}
\newcommand{\R}{\mathbb{R}}
\newcommand{\cE}{\mathcal{E}_{\mathrm{tr}}}
\newcommand{\cF}{\mathcal{F}}
\newcommand{\cX}{\mathcal{X}}
\newcommand{\cD}{\mathcal{D}}
\newcommand{\sphere}{\mathbb{S}^{1}}
\newcommand{\tr}{\operatorname{tr}}
\newcommand{\diag}{\operatorname{diag}}
\newcommand{\Risk}{\mathcal{L}}
\newcommand{\Jstar}{J_{\star}(a)}
\newcommand{\gap}{\Gamma_a}
\DeclareMathOperator*{\argmax}{arg\,max}

\title{On the Limits of Maximal Coding Rate Reduction for Out-of-Distribution Generalisation}
\author{{\bfseries Menghui Zhou \quad Gaoshan Bi \quad Vitaveska Lanfranchi \quad Po Yang} \\
{\normalfont School of Computer Science} \\
{\normalfont University of Sheffield} \\
{\footnotesize\ttfamily \{menghui.zhou, gaoshan.bi, v.lanfranchi, po.yang\}@sheffield.ac.uk}}
\begin{document}
\maketitle
\lhead{arXiv preprint}

\input{sections/abstract}
\input{sections/introduction}

\input{sections/setup}

\input{sections/failure}

\input{sections/invariance}
\input{sections/discussion}
\input{sections/statements}
\bibliography{references}
\bibliographystyle{iclr2027_conference}
\clearpage
\appendix
\input{sections/appendix_overview}
\input{sections/related}
\clearpage
\input{sections/proofs}
\input{sections/secondary}

\input{sections/experiments}
\input{sections/waterbirds_setup}
\end{document}

%% file: sections/abstract.tex
\begin{abstract}
Substantial efforts have been devoted to making deep learning objectives, representations, and architectures interpretable, with the goal of improving the safety, robustness, and generalisation of learning systems in diverse real-world applications. The recently proposed maximal coding rate reduction (\mcr) offers a promising information-theoretic framework for learning structured, discriminative representations of class-wise submanifolds and has inspired interpretable white-box architectures. However, we observe that \mcr{} can completely fail under distribution shift, motivating our study of its out-of-distribution (OOD) generalisation limits. We establish two limitations of \mcr{} for OOD generalisation. First, the \mcr{} objective alone can admit complete prediction failure: a representation based entirely on unstable environmental features can achieve the global coding optimum yet fail completely after correlation reversal, despite an available perfectly stable feature. This exact-optimum example includes test inputs that cannot occur during training. Even when every possible test input can also occur during training, coding quality can be arbitrarily close to optimal while prediction error is arbitrarily close to 100\%. Second, directly incorporating the invariance principle underlying widely successful invariant risk minimisation (IRM) and risk extrapolation (REx) does not eliminate this failure. The failing representation admits the same optimal coding operator across training environments, showing that shared coding optimality does not ensure stable prediction. Reliable OOD guarantees for \mcr{} therefore require additional new assumptions or learning principles that establish stable predictive relationships across environments.
\end{abstract}

%% file: sections/introduction.tex
\section{Introduction}
\label{sec:intro}

Over the past decade, deep learning has transformed a wide range of applications, from language processing~\citep{devlin2019bert} and computer vision~\citep{he2016deep} to scientific discovery~\citep{abramson2024accurate} and robotics~\citep{kaufmann2023champion}. Despite these advances, modern deep networks are still designed largely through trial and error~\citep{elsken2019survey, zhou2024integrating}. Researchers often choose architectures and training strategies through extensive experimentation~\citep{zoph2017neural,hutter2019automated}. Yet it remains difficult to explain the resulting systems through a clear set of computational principles~\citep{lipton2018mythos}. This gap motivates the development of interpretable learning systems~\citep{rudin2019stop} whose objectives, representations, and computations can be understood within a common mathematical framework~\citep{ma2022principles}.

Maximal coding rate reduction (\mcr) offers a promising approach to this goal~\citep{yu2020learning}. It starts with a basic question about representation learning: \emph{what structure should a network learn and preserve from data?} The framework assumes that high-dimensional data from each class often lie near a low-dimensional submanifold~\citep{ma2022principles}. A useful representation should capture this class-specific structure, preserve meaningful variation within each class, and separate different classes. Building on earlier work that connects lossy data coding with data segmentation~\citep{ma2007segmentation}, \mcr{} introduces an information-theoretic objective that maximises the difference between the coding rate of the complete dataset and the average coding rate of its individual classes. This objective gives a precise way to describe representation geometry: it encourages compact class representations, diversity within each class, and separation between classes~\citep{yu2020learning,chan2022redunet,wang2024geometry, yu2024white}.

The importance of \mcr{} extends from representation geometry to principled network design. Its geometric analyses describe solutions in which classes occupy mutually orthogonal subspaces and retain variation within each subspace~\citep{yu2020learning,wang2024geometry}. This connects a computable objective value to properties of the learned representation. Crucially, coding-rate principles also provide a way to derive white-box architectures whose layers have explicit mathematical roles. ReduNet unfolds the optimisation of \mcr{} into network layers with optimisation, statistical, and geometric interpretations~\citep{chan2022redunet}. Building on the coding-rate perspective, CRATE derives Transformer-like architectures from a sparse rate-reduction objective, giving attention and subsequent layers explicit roles in compressing and sparsifying representations~\citep{yu2024white}. These developments connect learning objectives, representation structure, and network computation within a common framework, making coding-rate principles relevant to the broader goal of interpretable deep learning. Coding-rate principles have also been extended to incremental learning~\citep{wu2021incremental,tong2023incremental}, joint discriminative and generative learning~\citep{dai2022ctrl,tong2024unsupervised}, image clustering~\citep{chu2024image}, and self-supervised representation learning~\citep{wu2025simplifying}. Their influence on both representation learning and architecture design makes it important to understand what the coding objective itself can guarantee about prediction reliability.

Despite these advantages, we find in practice that representations learned with \mcr{} can lead to severe out-of-distribution (OOD) prediction failures. On the Waterbirds dataset~\citep{sagawa2020distributionally}, an encoder trained with \mcr{} performs poorly on test groups whose background--label associations oppose those prevalent during training (Table~\ref{tab:waterbirds-test}). The saliency maps in Figure~\ref{fig:MCR2-spurious} further illustrate its sensitivity to background regions. These failures raise concerns about the reliability of \mcr{} representations in real-world deployment and provide the main motivation for examining what the objective can guarantee under distribution shift.

We first analyse the OOD generalisation of \mcr{} and show that optimal coding geometry alone does not guarantee reliable prediction in a new environment. We use a simple two-class model to isolate the failure mechanism and demonstrate that this limitation arises even in a basic classification setting. An encoder using only an environmental feature can achieve the global coding optimum, despite the availability of a perfectly reliable feature. When the environmental feature's relationship with the label reverses, the classes remain well separated, but the source-optimal classifier predicts every test label incorrectly. This noiseless example includes test inputs that cannot occur during training. We further show that even when every possible test input can also occur during training, coding quality can approach the global optimum while prediction error approaches 100\%.

Given the success of the invariance principle in OOD learning approaches such as invariant risk minimisation (IRM; \citealp{arjovsky2019invariant}) and risk extrapolation (REx; \citealp{krueger2021out}), a natural next step is to examine whether this principle can overcome the limitation of \mcr{}. We show that even directly incorporating this principle into the coding-rate framework does not eliminate OOD prediction failure. In this formulation, invariance requires the same coding operator to be optimal in every training environment. Yet a representation relying entirely on an unstable environmental feature satisfies this requirement and still fails when the feature's relationship with labels reverses.

To the best of our knowledge, this work provides the first systematic analysis of the limitations of influential \mcr{} framework for OOD generalisation.  Our main contributions are as follows:
\begin{itemize}
	\item \textbf{The \mcr{} objective alone does not guarantee reliable OOD generalisation.}
We first prove that \mcr{} can fail completely in OOD generalisation even in a simple noiseless two-class model. An encoder relying solely on an environmental feature achieves the global \mcr{} optimum, despite the availability of a perfectly stable predictive feature. When the envirnmental correlation reverses, its source-optimal classifier misclassifies every test example. This construction includes test inputs that cannot occur during training.
	
	We then extend the analysis to noisy environments where training and test data share the same possible inputs. By deriving the exact coding-rate reduction, prediction risk, and global coding optimum, we show that an arbitrarily small gap from the \mcr{} optimum can coexist with almost complete prediction failure, for any fixed coding precision and any finite number of training environments.
	
\item \textbf{Directly incorporating the invariance principle into \mcr{} does not eliminate OOD failure.}
Motivated by the success of invariance-based approaches such as IRM and REx, we directly incorporate this principle into the \mcr{} framework through invariant coding rate reduction (\icr), which requires the same coding operator to be optimal across training environments. We prove that an encoder relying solely on an unstable environmental feature satisfies this requirement exactly, yet its source-optimal classifier can still misclassify almost every test example after correlation reversal. This failure persists even when training and test data share the same possible inputs and the coding value is arbitrarily close to the constrained global optimum.
\end{itemize}

These results distinguish the ability to organise class representations from the ability to predict reliably across environments. For representations learned with \mcr{}, these results identify the need for new assumptions or learning principles that address stable predictive relationships beyond the coding objective and the shared-optimality requirement studied here.

%% file: sections/setup.tex
\section{Method}
\label{sec:setup}
We first review the \mcr{} objective and use an object-and-background example and existing Waterbirds results to explain the motivation for our analysis. We then define the population objective, prediction risk, and two-class model. Vectors and matrices are written in bold throughout the paper.

\subsection{The maximal coding rate reduction objective}

\begin{wrapfigure}{r}{0.43\textwidth}
\vspace{-6pt}
\centering
\includegraphics[width=\linewidth]{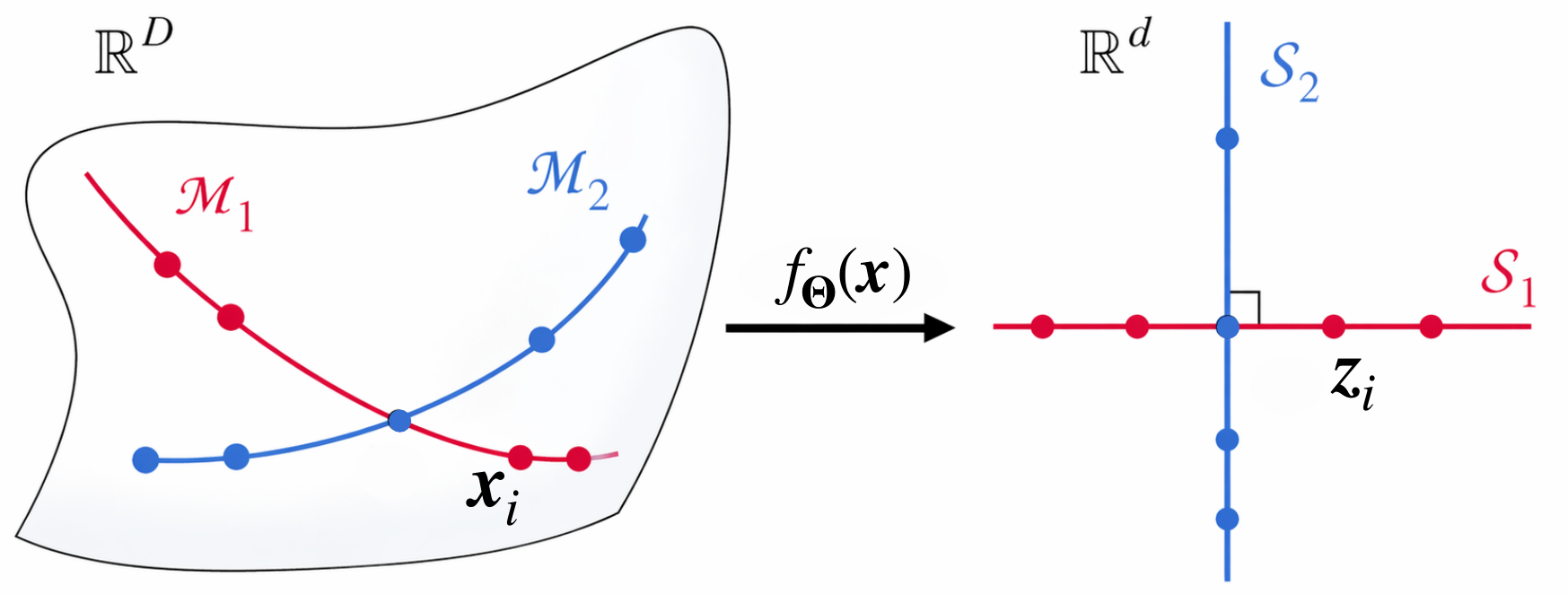}
\caption{Class-wise submanifolds mapped to orthogonal subspaces by \mcr{}. Adapted from Figure~1 of \citet{yu2020learning}.}
\label{fig:MCR2}
\vspace{-6pt}
\end{wrapfigure}

Consider a dataset $\mathbf{X}=[\mathbf{x}_1,\ldots,\mathbf{x}_n]\in\R^{D\times n}$ with $n$ samples from $K$ classes. Representation learning maps these high-dimensional observations to lower-dimensional features that retain useful information for tasks such as classification and generation. The \emph{manifold hypothesis} suggests that samples from each class concentrate near a low-dimensional submanifold $\mathcal{M}_y\subset\R^D$, so the full dataset lies near their union $\mathcal{M}=\bigcup_{y=1}^K\mathcal{M}_y$~\citep{hein2005intrinsic,pope2021intrinsic,wright2022high,ma2022principles}. Building on this hypothesis, the \mcr{} learning objective aims to capture the structure of each class submanifold and organise the resulting representations into mutually orthogonal low-dimensional subspaces~\citep{yu2020learning}. To learn such representations, a nonlinear encoder $f_{\boldsymbol{\Theta}}:\R^D\to\R^d$ maps each input $\mathbf{x}_i$ to a feature vector $\mathbf{z}_i=f_{\boldsymbol{\Theta}}(\mathbf{x}_i)$. Collecting these features gives the representation matrix $\mathbf{Z}=[\mathbf{z}_1,\ldots,\mathbf{z}_n]\in\R^{d\times n}$, typically with $d\ll D$.

In the supervised setting, let $\boldsymbol{\Pi}_y\in\R^{n\times n}$ be a diagonal membership matrix whose $i$th diagonal entry is one if sample $i$ belongs to class $y$, and zero otherwise. The number of samples in class $y$ is $n_y=\tr(\boldsymbol{\Pi}_y)>0$. The  learning objective of \mcr{} is  maximising the  coding-rate reduction:
\begin{equation}
\begin{aligned}
\widehat J(\mathbf{Z})
&=\underbrace{\frac12\log\det\!\left(\mathbf{I}_d+\tfrac{d}{n\epsilon^2}\mathbf{Z}\mathbf{Z}^\top\right)}_{\text{Expansion: }\mathcal R(\mathbf{Z})}
-\underbrace{\sum_{y=1}^K\tfrac{n_y}{2n}\log\det\!\left(\mathbf{I}_d+\tfrac{d}{n_y\epsilon^2}\mathbf{Z}\boldsymbol{\Pi}_y\mathbf{Z}^\top\right)}_{\text{Compression: }\mathcal R_c(\mathbf{Z},\boldsymbol{\Pi})},\\
&\text{subject to}\quad \|\mathbf{z}_i\|_2 = 1,
\qquad i=1,\ldots,n.
\end{aligned}
\label{eq:sample}
\end{equation}
where $\epsilon>0$ is the coding precision. The normalisation constraint ensures a common feature scale, allowing fair comparisons of coding-rate reduction across representations.

The expansion term $\mathcal R(\mathbf{Z})$ encourages the features to spread out in the feature space. The compression term $\mathcal R_c(\mathbf{Z},\boldsymbol{\Pi})$ encourages the features of each class to occupy a compact subspace. Together, this learning objective encourages the features of each class to lie in a low-dimensional linear subspace, with mutually orthogonal subspaces across classes \citep{yu2020learning}, as illustrated in Figure~\ref{fig:MCR2}. At an orthogonal solution,
\begin{equation}
	(\mathbf{Z}\boldsymbol{\Pi}_y)^\top
	(\mathbf{Z}\boldsymbol{\Pi}_{y'})=\mathbf{0},
	\qquad y\neq y'.
	\label{eq:class-orthogonality}
\end{equation}

\subsection{Concerns about learning environmental features}
Although \mcr{} encourages structured class representations, the objective does not specify which input factors the encoder should use. Both object features and background features can have low-dimensional structure and can help separate the training classes.

\begin{wrapfigure}{r}{0.35\textwidth}
	\vspace{-6pt}
	\centering
	\includegraphics[width=\linewidth]{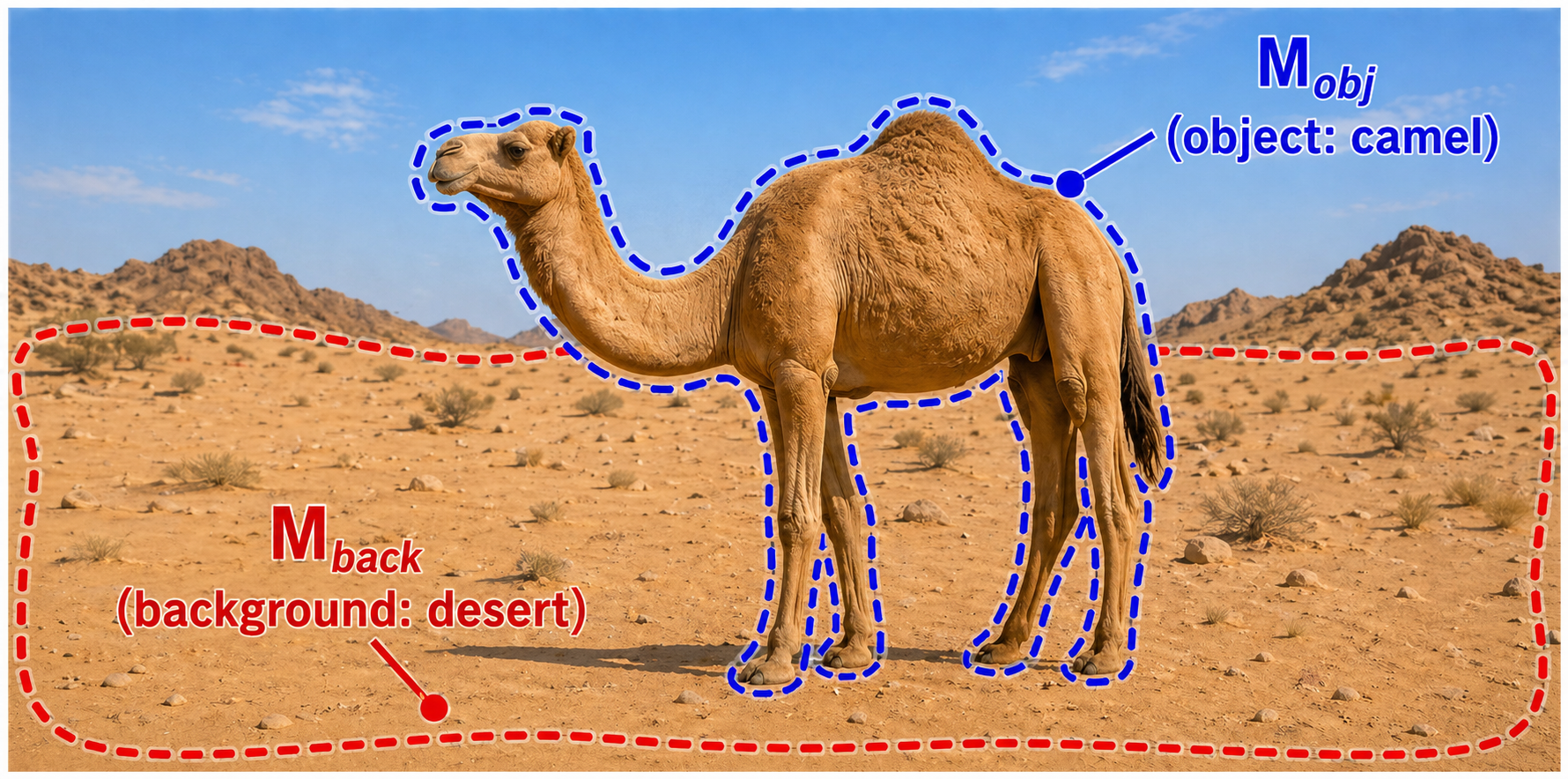}
	\caption{Object and background structures in a camel image.}
	\label{fig:object_vs_background}
		\vspace{-12pt}
\end{wrapfigure}
Our intuition comes from the following example. Consider the camel image in Figure~\ref{fig:object_vs_background}. Both the object and the background lie on low-dimensional submanifolds, denoted by $\mathcal M_{\mathrm{obj}}$ and $\mathcal M_{\mathrm{back}}$, respectively.  If capturing $\mathcal M_{\mathrm{back}}$ yields a higher coding-rate reduction than capturing $\mathcal M_{\mathrm{obj}}$, the \mcr{} objective favours the background representation. The model can therefore achieve strong class separation during training by learning background cues. When the background--label relationship changes, however, a classifier relying on these cues may fail.

\paragraph{Evidence from Waterbirds.}
\begin{wraptable}{r}{0.32\textwidth}
	\vspace{-6pt}
	\centering
	\caption{Accuracy (\%) on the two background-mismatched Waterbirds test groups.}
	\label{tab:waterbirds-test}
	\resizebox{\linewidth}{!}{%
		\begin{tabular}{ccc}
			\toprule
			Overall & $(y=0,p=1)$ & $(y=1,p=0)$\\
			\midrule
			46.84 & 57.07 & 10.90\\
			\bottomrule
	\end{tabular}}
	\vspace{-6pt}
\end{wraptable}
To examine whether this concern arises in practice, we train a ResNet-18 encoder~\citep{he2016deep} from scratch using \mcr{} on Waterbirds~\citep{sagawa2020distributionally}. The task is to distinguish landbirds ($y=0$) from waterbirds ($y=1$). Each image also has a land ($p=0$) or water ($p=1$) background. In the training set, $95\%$ of waterbirds appear on water backgrounds and $95\%$ of landbirds appear on land backgrounds; the remaining $5\%$ of each class have the opposite background. Thus, the background is strongly associated with the bird label during training. The experimental setup is described in Appendix~\ref{app:waterbirds-setup}.

We evaluate the encoder on landbirds with water backgrounds ($y=0,p=1$; 2,255 images) and waterbirds with land backgrounds ($y=1,p=0$; 642 images), reversing the predominant training background--label association. Table~\ref{tab:waterbirds-test} reports per-group and combined accuracy.
\begin{wrapfigure}{r}{0.35\textwidth}
	\vspace{-5pt}
	\centering
	\includegraphics[width=0.48\linewidth]{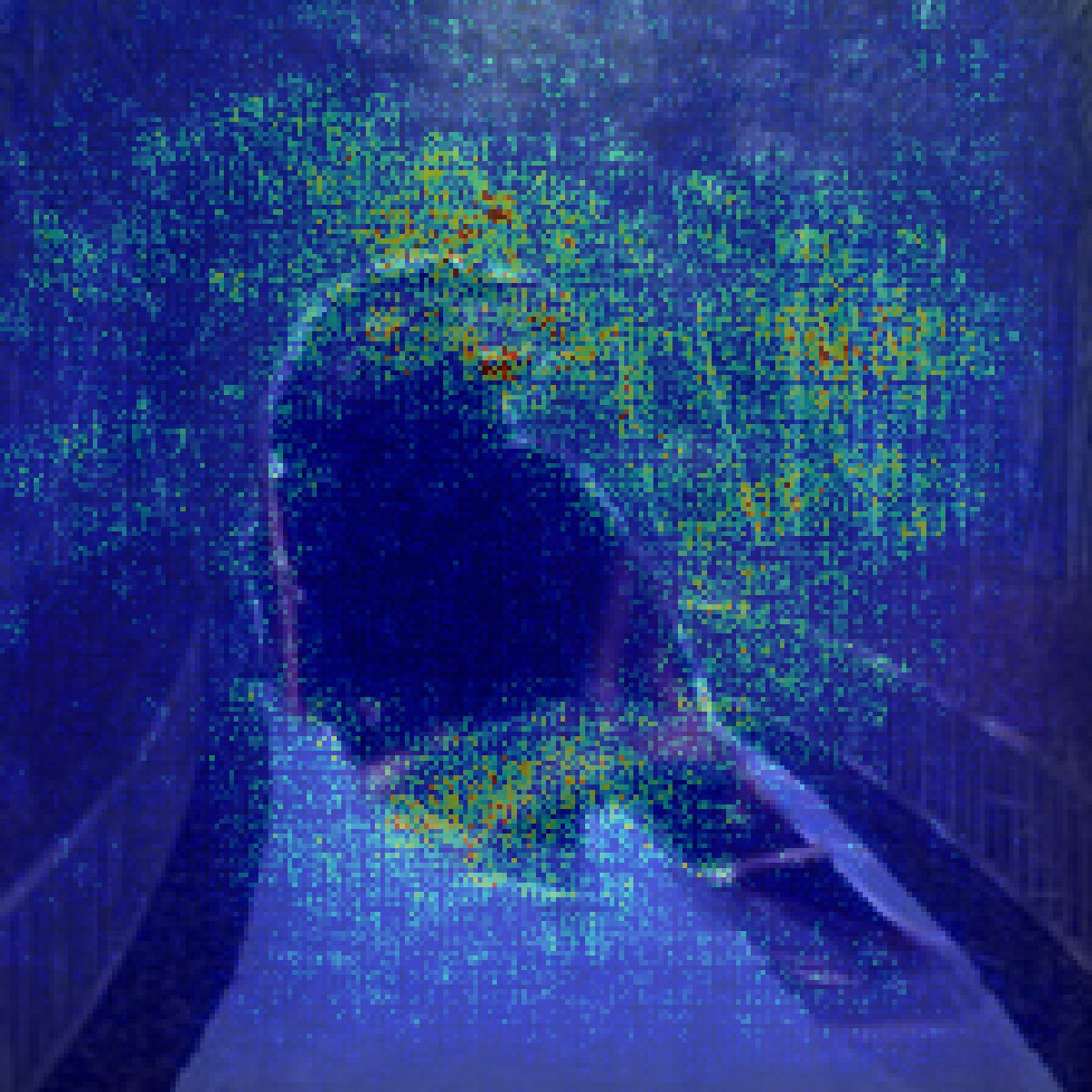}\hfill
	\includegraphics[width=0.48\linewidth]{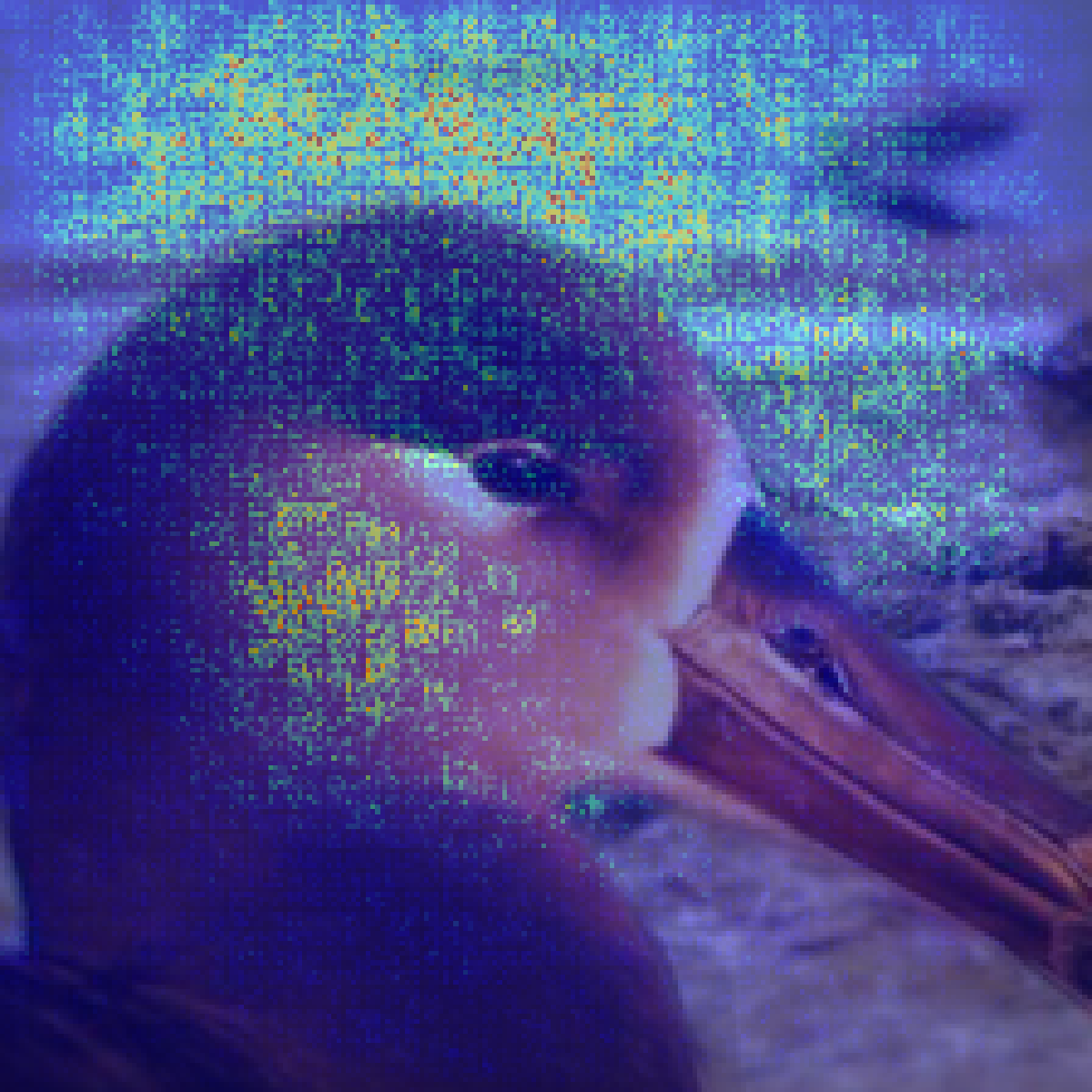}
	\caption{\mcr{} saliency maps on Waterbirds.}
	\label{fig:MCR2-spurious}
	\vspace{-5pt}
\end{wrapfigure}
 Combined accuracy is $46.84\%$, below the $50\%$ expected from uniform random guessing, while accuracy on waterbirds with land backgrounds is only $10.90\%$, indicating severe failure under this shift.
Saliency maps for one training image per class show stronger sensitivity to background regions than to much of the bird (Figure~\ref{fig:MCR2-spurious}). Together with the test results, they suggest reliance on background cues, illustrating that structured representations learned by \mcr{} alone do not ensure stable prediction across environments.

These experimental observations motivate a theoretical investigation of the limitations of \mcr{} under distribution shift. To study these limitations beyond the specific Waterbirds setting, we analyse a model with a perfectly stable feature and an environmental feature, allowing us to calculate the coding objective and prediction risk exactly and examine whether coding quality guarantees OOD prediction reliability.

\subsection{Population formulation and prediction risk}
In this section, we first express the \mcr{} objective in terms of data distributions and define prediction risk, allowing us to compare the objective value with classification performance across environments. We then introduce a model containing both a stable feature and an environmental feature whose association with the label can change. This model lets us study whether the objective distinguishes representations that support stable prediction from those that rely on environmental associations.

\paragraph{Population \mcr{} objective.}
Let $P_e$ denote the distribution of $(\mathbf{X},Y)$ in environment $e$, with class probabilities $p_y^e=P_e(Y=y)$. For an encoder $f:\cX\to\mathbb S^{d-1}$, define the uncentred second moments
\begin{equation}
	\mathbf{M}_{e,y}(f)=\E_e[f(\mathbf{X})f(\mathbf{X})^\top\mid Y=y],
	\qquad
	\mathbf{M}_e(f)=\sum_y p_y^e\mathbf{M}_{e,y}(f).
	\label{eq:moments}
\end{equation}
Replacing the empirical moments in Equation~\eqref{eq:sample} with these expectations gives the population \mcr{} objective:
\begin{equation}
	J_e(f)=\frac12\log\det(\mathbf{I}_d+a\mathbf{M}_e(f))
	-\frac12\sum_y p_y^e\log\det(\mathbf{I}_d+a\mathbf{M}_{e,y}(f)),
	\qquad a=\frac{d}{\epsilon^2}.
	\label{eq:population}
\end{equation}
Concavity of $\log\det$ ensures $J_e(f)\geq0$.

\paragraph{Learning and evaluation.}
For a finite set $\cE$ of training environments with weights $w_e>0$ satisfying $\sum_{e\in\cE}w_e=1$, define
\begin{equation}
	J_{\mathrm{tr}}(f)=\sum_{e\in\cE}w_eJ_e(f),
	\qquad
	\Risk_e(g\circ f)=P_e(g(f(\mathbf{X}))\neq Y).
	\label{eq:trainrisk}
\end{equation}
Here, $J_{\mathrm{tr}}$ is the training \mcr{} objective and $\Risk_e$ is the classification error in environment $e$. For each representation, we use a source-optimal classifier that minimises $\sum_{e\in\cE}w_e\Risk_e(g\circ f)$ over all classifiers and remains fixed during testing. This isolates representation quality from classifier estimation or optimisation errors. For both encoders below, a linear decision rule attains this optimum.

\paragraph{A stable factor and an environmental factor.}
Our main results consider $d=2$, binary labels, and inputs $\mathbf{X}=(C,S)\in\{0,1\}^2$, where
\begin{equation}
	Y\sim\operatorname{Bernoulli}(1/2),\qquad
	C=Y,\qquad S=Y\oplus B_e,\qquad
	B_e\sim\operatorname{Bernoulli}(\delta_e),\quad B_e\perp Y.
	\label{eq:model}
\end{equation}
Here, $\oplus$ denotes exclusive-or. The stable feature $C$ predicts the label perfectly in every environment, whereas the environmental feature $S$ disagrees with it with probability $\delta_e$. These are predictive relationships and require no causal assumptions.

We consider all unit-norm encoders
\begin{equation}
	\cF=\{f:\{0,1\}^2\to\sphere\}.
	\label{eq:class}
\end{equation}
With $\mathbf{b}_0=(1,0)^\top$ and $\mathbf{b}_1=(0,1)^\top$, define
\begin{equation}
	f_C(c,s)=\mathbf{b}_c,\qquad
	f_S(c,s)=\mathbf{b}_s,\qquad
	g_0(\mathbf z)=\mathbf1\{z_2>z_1\}.
	\label{eq:encoders}
\end{equation}
The encoders $f_C$ and $f_S$ use the stable and environmental features, respectively, mapping each to the same two orthogonal unit vectors. Their input probabilities are
\begin{equation}
	P_e(C=c,S=s)=\tfrac12\big[(1-\delta_e)\mathbf1\{c=s\}
	+\delta_e\mathbf1\{c\neq s\}\big].
	\label{eq:support}
\end{equation}
For $0<\delta_e<1$, all four inputs have positive probability. Varying $\delta_e$ therefore changes the environmental association while preserving input support, allowing us to compare the two encoders' \mcr{} objective values and prediction risks under this shift.

%% file: sections/failure.tex
\section{Discriminative coding geometry does not ensure OOD generalisation}
\label{sec:failure}
We show how a representation can have a coding value close to the best possible value while a classifier trained on it predicts almost every target label incorrectly. We first identify the global coding optimum and then compare it with the environmental encoder. In  this section $a=2/\epsilon^2>0$ is fixed.

\begin{lemma}[Sharp binary coding bound]
\label{lem:bound}
For any balanced binary distribution and any unit-norm encoder into $\R^2$,
\begin{equation}
J_e(f)\leq \Jstar:=\log(1+a/2)-\tfrac12\log(1+a).
\label{eq:optimum}
\end{equation}
Equality holds if and only if the two class-conditional second moments are rank-one projectors onto orthogonal directions. In model~\eqref{eq:model}, $f_C$ attains equality in every environment, so $$\max_{f\in\cF}J_{\mathrm{tr}}(f)=\Jstar.$$
\end{lemma}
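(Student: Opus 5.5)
The proof works with eigenvalues. Write $\mathbf{M}_0,\mathbf{M}_1$ for the class-conditional second moments. Each is a $2\times 2$ positive semidefinite matrix with trace $1$, because $\|f(\mathbf{x})\|_2=1$. Their average is $\mathbf{M}=\tfrac12(\mathbf{M}_0+\mathbf{M}_1)$, which also has trace $1$.

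\emph{Expansion term.} Let $\mathbf{M}$ have eigenvalues $\mu,1-\mu$ with $\mu\in[0,1]$. Then
\[
\tfrac12\log\det(\mathbf{I}+a\mathbf{M})=\tfrac12\log\bigl((1+a\mu)(1+a(1-\mu))\bigr).
\]
The product $(1+a\mu)(1+a(1-\mu))=1+a+a^2\mu(1-\mu)$ is maximised at $\mu=\tfrac12$. So the expansion term is at most $\log(1+a/2)$, with equality if and only if $\mathbf{M}=\tfrac12\mathbf{I}_2$.

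\emph{Compression term.} Let $\mathbf{M}_y$ have eigenvalues $\lambda_y,1-\lambda_y$. Then $\log\det(\mathbf{I}+a\mathbf{M}_y)=\log(1+a+a^2\lambda_y(1-\lambda_y))$. Since $\lambda_y(1-\lambda_y)\ge 0$, this is at least $\log(1+a)$, with equality if and only if $\lambda_y\in\{0,1\}$. That is exactly the case where $\mathbf{M}_y$ is a rank-one projector $\mathbf{u}_y\mathbf{u}_y^\top$. The compression term is therefore at least $\tfrac12\log(1+a)$.

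\emph{Upper bound.} Subtracting the two bounds gives $J_e(f)\le \Jstar$, since the two terms are bounded independently.

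\emph{Equality characterisation.} Equality requires both bounds to be tight at once. The compression bound forces $\mathbf{M}_y=\mathbf{u}_y\mathbf{u}_y^\top$ for unit vectors $\mathbf{u}_y$. Then
\[
\tfrac12(\mathbf{u}_0\mathbf{u}_0^\top+\mathbf{u}_1\mathbf{u}_1^\top)=\tfrac12\mathbf{I}_2
\]
must also hold. Taking the determinant, or evaluating at $\mathbf{u}_0$, shows this holds if and only if $\langle \mathbf{u}_0,\mathbf{u}_1\rangle=0$. Conversely, orthogonal rank-one projectors give $\mathbf{M}=\tfrac12\mathbf{I}$ and every eigenvalue in $\{0,1\}$, so both bounds are attained. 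This is the main point needing care: the two separate optimality conditions must combine into exactly the orthogonality condition. A direct eigenvector computation settles it.

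\emph{Application to model~\eqref{eq:model}.} Under $f_C$, class $y$ maps deterministically to $\mathbf{b}_y$, because $C=Y$ in every environment. Hence $\mathbf{M}_{e,y}(f_C)=\mathbf{b}_y\mathbf{b}_y^\top$. These are orthogonal rank-one projectors, and the classes are balanced since $p_y^e=\tfrac12$. So $J_e(f_C)=\Jstar$ for every $e$.

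\emph{Training objective.} Every $J_e(f)\le\Jstar$ and $\sum_{e\in\cE} w_e=1$, so $J_{\mathrm{tr}}(f)\le\Jstar$ for all $f\in\cF$. The encoder $f_C$ attains this value, which gives $\max_{f\in\cF}J_{\mathrm{tr}}(f)=\Jstar$.
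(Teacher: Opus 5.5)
Your proof is correct and follows essentially the same route as the paper. You bound the expansion term tightly, with equality exactly when $\mathbf{M}=\tfrac12\mathbf{I}_2$, and each compression term tightly, with equality exactly when $\det\mathbf{M}_y=0$, i.e.\ when $\mathbf{M}_y$ is a rank-one projector. You then combine the two equality conditions by evaluating at $\mathbf{u}_0$ to force orthogonality, and finish with attainment by $f_C$ and a convex-combination argument over environments. Your eigenvalue parametrisation $1+a+a^2\mu(1-\mu)$ is the paper's arithmetic--geometric mean step and determinant expansion $1+a\,\mathrm{tr}(\mathbf{M}_y)+a^2\det\mathbf{M}_y$ written in a different form.
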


Lemma~\ref{lem:bound} gives a precise meaning to successful coding geometry in this model: each class occupies one direction, and the two directions are orthogonal. The stable encoder achieves this structure in every environment, so optimal coding and reliable prediction are both possible. The lemma also provides a global benchmark over the entire encoder class. The next proposition measures how closely a representation based only on the environmental feature can approach it.

\begin{proposition}[Exact environmental objective and prediction risk]
\label{prop:exact}
Under Equation~\eqref{eq:model}, define
\begin{equation}
\gap(\delta):=\frac12\log\!\left(1+\frac{a^2}{1+a}\delta(1-\delta)\right).
\label{eq:gap}
\end{equation}
Then $J_e(f_S)=\Jstar-\gap(\delta_e)$ and $\Risk_e(g_0\circ f_S)=\delta_e$. If $\bar\delta:=\sum_e w_e\delta_e<1/2$, $g_0$ is the unique source-optimal decision rule on the two output codes. In contrast, $\Risk_e(g_0\circ f_C)=0$ for every $e$.
\end{proposition}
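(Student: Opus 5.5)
The plan is to compute everything explicitly, since $f_S$ takes only the two values $\mathbf{b}_0,\mathbf{b}_1$ and the class-conditional moments are therefore diagonal.

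\textbf{Coding value.} Under model~\eqref{eq:model}, conditional on $Y=0$ we have $S=B_e$, so $f_S(\mathbf{X})=\mathbf{b}_1$ with probability $\delta_e$. This gives $\mathbf{M}_{e,0}(f_S)=\diag(1-\delta_e,\delta_e)$, and symmetrically $\mathbf{M}_{e,1}(f_S)=\diag(\delta_e,1-\delta_e)$. Since $p_0^e=p_1^e=\tfrac12$, the mixture is $\mathbf{M}_e(f_S)=\tfrac12\mathbf{I}_2$, so the expansion term is $\tfrac12\cdot 2\log(1+a/2)=\log(1+a/2)$. Each class term equals $\tfrac12\log[(1+a(1-\delta))(1+a\delta)]$, and expanding the product gives $1+a+a^2\delta(1-\delta)=(1+a)\bigl(1+\tfrac{a^2}{1+a}\delta(1-\delta)\bigr)$. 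Subtracting yields $J_e(f_S)=\log(1+a/2)-\tfrac12\log(1+a)-\gap(\delta_e)=\Jstar-\gap(\delta_e)$, which matches Lemma~\ref{lem:bound}. As a check, at $\delta_e=0$ the moments are orthogonal rank-one projectors and the gap vanishes, consistent with the equality case of the lemma.

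\textbf{Risks.} The rule $g_0\circ f_S$ outputs $\mathbf 1\{S=1\}=S$, so it errs exactly when $S\neq Y$, i.e.\ when $B_e=1$; hence $\Risk_e(g_0\circ f_S)=\delta_e$. Likewise $g_0\circ f_C=C=Y$, so $\Risk_e(g_0\circ f_C)=0$ for every $e$.

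\textbf{Uniqueness of the source-optimal rule.} A decision rule on the two output codes is one of four maps $\{\mathbf{b}_0,\mathbf{b}_1\}\to\{0,1\}$. Values of $g$ off these two codes do not affect the risk, so uniqueness is understood on the codes, as the statement says. The weighted source risk is computed as follows:
\begin{itemize}
  \item the two constant maps each have risk $\tfrac12$;
  \item $g_0$ has risk $\sum_e w_e\delta_e=\bar\delta$;
  \item the flipped map has risk $1-\bar\delta$.
\end{itemize}
When $\bar\delta<\tfrac12$ we have $\bar\delta<\tfrac12<1-\bar\delta$, so $g_0$ is the strict unique minimiser. Equivalently, one can argue pointwise with the Bayes rule via $P(Y=1\mid S=s)$ under the mixture, which is $1-\bar\delta$ for $s=1$.

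I expect no step to be a genuine obstacle; the only care needed is the algebraic factorisation producing $\gap$ and stating uniqueness only on the output codes.
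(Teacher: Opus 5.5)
Your proposal is correct and follows essentially the same route as the paper. The shared steps are the explicit diagonal class moments with $\mathbf{M}_e(f_S)=\tfrac12\mathbf{I}_2$, the factorisation $1+a+a^2\delta(1-\delta)=(1+a)\bigl(1+\tfrac{a^2}{1+a}\delta(1-\delta)\bigr)$, and reading $g_0\circ f_S=S$ so that the error event is $\{B_e=1\}$. Your enumeration of the four maps on $\{\mathbf{b}_0,\mathbf{b}_1\}$ is the paper's formula $\bar\delta+\tfrac{1-2\bar\delta}{2}m(g)$ written out case by case. The paper additionally notes that randomised rules cannot do better, since their conditional errors are convex combinations of the deterministic ones.
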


When $\delta_e$ is small, the environmental feature usually agrees with the label. It therefore produces both a small coding gap and a small source prediction error. Yet the classifier has learned to read that feature, whose reliability can change across environments. Proposition~\ref{prop:exact} makes the consequence explicit: the classifier's error in any environment is exactly the probability that the environmental feature disagrees with the label. The next result shows how large the gap between coding quality and target reliability can become. 

\begin{theorem}[Arbitrarily small coding gap and almost maximal target error]
\label{thm:near}
Fix $a>0$, a positive integer $E$, positive source weights summing to one, and a tolerance $\eta>0$. There exist $E$ training environments and a target environment from Equation~\eqref{eq:model}, all with the same input support, such that
\begin{equation}
0\leq \max_{f\in\cF}J_{\mathrm{tr}}(f)-J_{\mathrm{tr}}(f_S)\leq\eta,
\qquad
\Risk_{\mathrm{te}}(g_0\circ f_S)\geq1-\eta,
\label{eq:near-failure}
\end{equation}
while $g_0$ is source-optimal on $f_S$ and $\Risk_{\mathrm{te}}(g_0\circ f_C)=0$. The source mismatch probabilities can be chosen distinct. More explicitly, if $0<\delta_e\leq\delta_{\max}<1/2$, then
\begin{equation}
\Jstar-J_{\mathrm{tr}}(f_S)
=\sum_e w_e\gap(\delta_e)
\leq \frac{a^2}{2(1+a)}\delta_{\max}.
\label{eq:gapbound}
\end{equation}
Choosing $\delta_{\mathrm{te}}=1-\delta_{\max}$ yields target error $1-\delta_{\max}$.
\end{theorem}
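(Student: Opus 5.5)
The proof is a direct assembly of Lemma~\ref{lem:bound} and Proposition~\ref{prop:exact}. There is little room for a genuine obstacle; the work is choosing the mismatch probabilities and checking one elementary bound.

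\textbf{Bounding the coding gap.} By Lemma~\ref{lem:bound}, $\max_{f\in\cF}J_{\mathrm{tr}}(f)=\Jstar$. Averaging the identity $J_e(f_S)=\Jstar-\gap(\delta_e)$ from Proposition~\ref{prop:exact} with the weights $w_e$ gives the equality in \eqref{eq:gapbound}. Nonnegativity holds because $\gap\geq0$ on $[0,1]$.

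For the upper bound, use $\log(1+x)\leq x$ and then $\delta(1-\delta)\leq\delta$. This yields
\[
\gap(\delta_e)\leq \frac{a^2}{2(1+a)}\delta_e(1-\delta_e)\leq \frac{a^2}{2(1+a)}\delta_{\max}.
\]
Averaging over $e$ preserves the bound, since $\sum_e w_e=1$.

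\textbf{Choice of environments.} Given $\eta>0$, choose
\[
\delta_{\max}\leq\min\Big\{\eta,\ \tfrac{2(1+a)}{a^2}\eta,\ \tfrac14\Big\}.
\]
Then pick $E$ distinct values $\delta_1,\dots,\delta_E\in(0,\delta_{\max}]$, for example $\delta_e=e\,\delta_{\max}/E$. These are distinct and satisfy $0<\delta_e<1/2$.

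\textbf{Common support.} Take the target to be $\delta_{\mathrm{te}}=1-\delta_{\max}\in(0,1)$. By \eqref{eq:support}, every source and target environment puts positive mass on all four inputs, so they all share the same input support.

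\textbf{Source-optimality and risks.} Since $\bar\delta\leq\delta_{\max}<1/2$, Proposition~\ref{prop:exact} shows that $g_0$ is the unique source-optimal rule on the output codes of $f_S$. The same proposition gives
\[
\Risk_{\mathrm{te}}(g_0\circ f_S)=\delta_{\mathrm{te}}=1-\delta_{\max}\geq 1-\eta,
\]
and $\Risk_{\mathrm{te}}(g_0\circ f_C)=0$. These apply here because Proposition~\ref{prop:exact} holds for any $\delta_e\in[0,1]$ in model~\eqref{eq:model}, including the target value.

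\textbf{Note on the classifier class.} Source-optimality is taken over all classifiers, but $f_S$ only produces the two codes $\mathbf b_0,\mathbf b_1$. Any classifier therefore acts through its values on these two points, so "unique on the output codes" is exactly what Proposition~\ref{prop:exact} provides.
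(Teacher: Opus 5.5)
Your proposal is correct and follows essentially the same route as the paper's proof. Both bound $\Gamma_a(\delta)$ via $\log(1+x)\le x$ and $\delta(1-\delta)\le\delta$, take $\delta_e=e\,\delta_{\max}/E$ with $\delta_{\max}\le 1/4$ small enough, set $\delta_{\mathrm{te}}=1-\delta_{\max}$, and invoke Lemma~\ref{lem:bound} and Proposition~\ref{prop:exact} for the optimum, source optimality, the two target risks, and full support. The only difference is cosmetic: the paper uses $r=\min\{1/4,\eta/2,(1+a)\eta/a^2\}$ to leave slack $\eta/2$ in both inequalities, whereas your constants meet the bounds exactly at $\eta$, which is equally valid.
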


\paragraph{Meaning and scope.}
Theorem~\ref{thm:near} shows that an arbitrarily small coding gap can coexist with an almost maximal target error. Every possible target input already has positive source probability. What changes is the frequency of inputs on which the environmental feature predicts the wrong label: these inputs are rare during training and common at test time. Observing the same possible inputs therefore does not by itself ensure reliable prediction when their probabilities change.

The distinction between exact and approximate optimisation matters here. At positive noise, $f_C$ has a strictly higher objective than $f_S$. More strongly, in this finite model with full source support, every exact global optimiser yields zero target error with an unrestricted source-optimal classifier (Appendix~\ref{app:exact-stability}). The theorem establishes that an arbitrarily small objective gap does not provide a uniform OOD guarantee over the permitted shifts. It does not assert failure of exact optimisers in this setting or exclude guarantees that additionally control the size of the shift.

At the noiseless boundary, one obtains an exact, but support-changing, statement. If every source has $\delta_e=0$, both $f_C$ and $f_S$ globally maximise the source objective. At a target with $\delta_{\mathrm{te}}=1$, the source-optimal classifier on $f_S$ has error one. This example establishes complete prediction failure at an exact optimum with changing input support, whereas Theorem~\ref{thm:near} establishes near-optimality and arbitrarily severe failure under identical input support.

\subsection{Coding quality can survive correlation reversal}
\label{sec:reversal}
Note that the failure is not necessarily accompanied by a deterioration of coding geometry on the target distribution. From Equation~\eqref{eq:gap},
\begin{equation}
J_{\delta}(f_S)=J_{1-\delta}(f_S),\qquad
\Risk_{\delta}(g_0\circ f_S)=\delta,\qquad
\Risk_{1-\delta}(g_0\circ f_S)=1-\delta.
\label{eq:reversal}
\end{equation}

For example, if the environmental cue agrees with the label in $99\%$ of source samples, the source classifier has error $1\%$. After reversal, its error becomes $99\%$, while the coding value remains exactly the same. The representation still distinguishes the classes, but the source classifier uses the wrong label assignment for most target inputs.

Both environments have the same marginal representation law, namely the uniform distribution on $\{\mathbf{b}_0,\mathbf{b}_1\}$. Reversal swaps the two class-conditional moment matrices, leaving their average and their weighted log-determinants unchanged. The target remains easy to classify \emph{if a new target head is fitted}; its Bayes head on $f_S$ assigns the opposite labels. The task here is to use the source head without such refitting.

\begin{wrapfigure}{r}{0.5\textwidth}
	\centering
	\resizebox{\linewidth}{!}{%
		\begin{tikzpicture}[font=\small,>=Stealth]
			\node[font=\bfseries] at (1.7,2.5) {Source: $S=Y$};
			\node[font=\bfseries] at (7.0,2.5) {Target: $S=1-Y$};
			\draw[->,gray] (0,0)--(3.2,0);
			\draw[->,gray] (0,0)--(0,2.1);
			\fill[blue!70!black] (2.3,0) circle (4pt);
			\fill[orange!80!black] (0,1.65) circle (4pt);
			\node[below,align=center] at (2.3,-0.13) {$\mathbf{b}_0$: class $0$};
			\node[right,align=left] at (0.12,1.65) {$\mathbf{b}_1$: class $1$};
			\draw[->,gray] (5.3,0)--(8.5,0);
			\draw[->,gray] (5.3,0)--(5.3,2.1);
			\fill[orange!80!black] (7.6,0) circle (4pt);
			\fill[blue!70!black] (5.3,1.65) circle (4pt);
			\node[below,align=center] at (7.6,-0.13) {$\mathbf{b}_0$: class $1$};
			\node[right,align=left] at (5.42,1.65) {$\mathbf{b}_1$: class $0$};
			\draw[->,thick] (3.35,1.1)--node[above,align=center,font=\footnotesize]{correlation\\reversal}(4.85,1.1);
			\node[align=center] at (4.25,-1.05) {Same coding value and optimal coding directions.\\Fixed source head: $\mathbf{b}_0\mapsto0$, $\mathbf{b}_1\mapsto1$.};
	\end{tikzpicture}}
	\caption{Optimal coding geometry survives correlation reversal; prediction fails.}
	\label{fig:mechanism}
	\vspace{-10pt}
\end{wrapfigure}

Figure~\ref{fig:mechanism} illustrates the noiseless limit. In the positive-noise case, each class occupies both codes, with a large mass on one and a small mass on the other. Thus exact class-subspace orthogonality holds at the limit, whereas the identical-support theorem establishes near-optimality with small cross-class contamination. We make no claim of exact orthogonality at positive noise. Also, target coding-rate reduction uses target labels in Equation~\eqref{eq:population}; it is a theoretical diagnostic, not an unlabelled deployment statistic.

%% file: sections/invariance.tex
\section{Shared optimal coding does not ensure generalisation}
\label{sec:invariance}
The preceding results show why good coding geometry alone does not ensure stable prediction. Given the success of invariance-based methods such as invariant risk minimisation (IRM)~\citep{arjovsky2019invariant} and risk extrapolation (REx)~\citep{krueger2021out} in OOD generalisation, it is natural to ask whether incorporating the invariance principle into \mcr{} can address the failures established above. We show, however, that a direct formulation based on shared optimal coding can still admit severe OOD prediction failure.

Specifically, IRM and REx impose invariance at the level of prediction: IRM seeks a classifier that is simultaneously optimal across training environments, while REx penalises differences in prediction risk across environments. Within the \mcr{} framework, the learning criterion evaluates representation geometry through coding-rate reduction. A direct analogue of shared classifier optimality therefore requires a common linear coding operator that transforms the representation to maximise the \mcr{} objective in every training environment. For convenience, we refer to this shared-optimality requirement as invariant coding rate reduction (\icr{}). This constraint enforces agreement on optimal coding structure across environments but, as our analysis shows, does not ensure stable predictive relationships.

\subsection{A shared-optimal-coding constraint}
The coding operator acts on the learned representation and changes the directions assigned to its outputs. If different environments favoured incompatible operators, a shared-optimality requirement could reject the representation. The issue is whether agreement on an operator also ensures that a classifier can keep using the same label assignments. For $f\in\cF$, define the set of admissible linear coding operators by
\begin{equation}
\cD(f)=\{\mathbf{D}\in\R^{2\times2}:\|\mathbf{D}f(\mathbf{x})\|_2=1\ \text{for all }\mathbf{x}\in\cX\}.
\label{eq:operator-domain}
\end{equation}
Here $\cX=\{0,1\}^2$ is fixed, and $\mathbf{I}_2\in\cD(f)$. The constraint applies to the encoded representations $\mathbf{D}f(\mathbf{x})$ rather than to the operator's Frobenius norm. For $f_C$ and $f_S$, which attain both basis codes, this set is exactly
\begin{equation}
\cD(f_C)=\cD(f_S)=\{[\mathbf{u},\mathbf{v}]:\mathbf{u},\mathbf{v}\in\sphere\}.
\label{eq:columns}
\end{equation}
It includes non-orthogonal columns and rank-one operators; it is not restricted to rotations for which rate reduction is automatically unchanged.

Consider the following idealised invariant coding-rate reduction objective:
\begin{equation}
\begin{aligned}
J^*_{\mathrm{ICR}}=\sup_{f\in\cF,\,\mathbf{D}\in\cD(f)}\quad
&\sum_{e\in\cE}w_eJ_e(\mathbf{D}f)\\
\text{subject to}\quad
&\mathbf{D}\in\bigcap_{e\in\cE}\argmax_{\mathbf{A}\in\cD(f)}J_e(\mathbf{A}f).
\end{aligned}
\label{eq:icr}
\end{equation}
The same operator must be globally optimal on each source distribution, while the outer encoder remains learnable. This is an exact population constraint: the common operator must attain each environment's maximum, rather than merely give similar coding values. It therefore lets us examine the limitation of the requirement itself without errors from finite samples or approximate inner optimisation. The operator domain is given by Equation~\eqref{eq:operator-domain}.

\begin{theorem}[A shared optimal coding operator admits the failing encoder]
\label{thm:shared}
In model~\eqref{eq:model}, let $\mathbf{D}=[\mathbf{u},\mathbf{v}]\in\cD(f_S)$ and $t=1-(\mathbf{u}^\top \mathbf{v})^2$. Then
\begin{equation}
J_e(\mathbf{D}f_S)=\frac12\log\!\left(
\frac{1+a+(a^2/4)t}{1+a+a^2\delta_e(1-\delta_e)t}\right).
\label{eq:operator-value}
\end{equation}
If $\delta_e\neq1/2$, its maximisers are exactly the operators with orthogonal unit columns. If $\delta_e=1/2$, every admissible operator is a maximiser. Consequently:
\begin{enumerate}
\item $(f_S,\mathbf{I}_2)$ satisfies the shared-optimality constraint for any collection of source mismatch probabilities, including distinct ones.
\item $J^*_{\mathrm{ICR}}=\Jstar$, attained by $(f_C,\mathbf{I}_2)$.
\item In the positive-noise construction of Theorem~\ref{thm:near}, $(f_S,\mathbf{I}_2)$ is a feasible solution whose objective is within $\eta$ of $J^*_{\mathrm{ICR}}$, yet its source-optimal classifier has target error at least $1-\eta$.
\end{enumerate}
The identity operator is also optimal on the reversed target environment.
\end{theorem}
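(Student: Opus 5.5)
The plan is to compute the coding value of $\mathbf{D}f_S$ in closed form from the second-moment matrices, read off its maximisers from monotonicity in $t$, and then deduce the three consequences from Lemma~\ref{lem:bound} and Theorem~\ref{thm:near}.

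\textbf{Moment matrices and the closed form.} Writing $\mathbf{D}=[\mathbf{u},\mathbf{v}]$, the encoder $\mathbf{D}f_S$ sends $S=0\mapsto\mathbf{u}$ and $S=1\mapsto\mathbf{v}$. Under model~\eqref{eq:model}, class $0$ has $S=0$ with probability $1-\delta_e$, and class $1$ has $S=1$ with probability $1-\delta_e$. Hence
\[
\mathbf{M}_{e,0}=(1-\delta_e)\mathbf{u}\mathbf{u}^\top+\delta_e\mathbf{v}\mathbf{v}^\top,\qquad
\mathbf{M}_{e,1}=\delta_e\mathbf{u}\mathbf{u}^\top+(1-\delta_e)\mathbf{v}\mathbf{v}^\top,\qquad
\mathbf{M}_e=\tfrac12(\mathbf{u}\mathbf{u}^\top+\mathbf{v}\mathbf{v}^\top).
\]
The key algebraic tool is the identity
\[
\det(\mathbf{I}_2+\alpha\mathbf{u}\mathbf{u}^\top+\beta\mathbf{v}\mathbf{v}^\top)=1+\alpha+\beta+\alpha\beta\,t
\qquad\text{for unit }\mathbf{u},\mathbf{v}.
\]
To prove it, write the matrix as $\mathbf{I}+\mathbf{D}\,\diag(\alpha,\beta)\mathbf{D}^\top$ and apply Sylvester's determinant identity. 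This gives $\det(\mathbf{I}_2+\diag(\alpha,\beta)\mathbf{D}^\top\mathbf{D})$, and since $\det(\mathbf{D}^\top\mathbf{D})=1-(\mathbf{u}^\top\mathbf{v})^2=t$, expanding the $2\times2$ determinant yields the formula. The identity holds uniformly, including the rank-one case $t=0$, so no separate case analysis is needed.

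Applying it with the factor $a$:
\begin{itemize}
\item the marginal determinant is $1+a+a^2t/4$;
\item each class determinant is $1+a+a^2\delta_e(1-\delta_e)t$, the same for both classes.
\end{itemize}
Since the class weights are $1/2$, the compression term is $\tfrac12\log$ of this common value. This gives Equation~\eqref{eq:operator-value}.

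\textbf{Maximisers.} As a function of $t\in[0,1]$, the ratio $(1+a+\tfrac{a^2}{4}t)/(1+a+a^2\delta_e(1-\delta_e)t)$ is strictly increasing if $1/4>\delta_e(1-\delta_e)$, which holds exactly when $\delta_e\neq1/2$. It is identically $1$ when $\delta_e=1/2$. By Equation~\eqref{eq:columns}, every $t\in[0,1]$ is attainable within $\cD(f_S)$. So for $\delta_e\neq1/2$ the maximisers are exactly the operators with $t=1$, i.e.\ orthogonal unit columns. For $\delta_e=1/2$ every admissible operator is a maximiser.

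\textbf{Consequences.}
\begin{enumerate}
\item The identity $\mathbf{I}_2$ has orthogonal unit columns, so by the maximiser characterisation it maximises $J_e(\mathbf{A}f_S)$ in every environment, whatever the $\delta_e$ are.
\item \emph{Upper bound.} For any $f$ and $\mathbf{D}\in\cD(f)$, the composition $\mathbf{D}f$ takes values in $\sphere$, so it lies in $\cF$. Lemma~\ref{lem:bound} then gives $J_e(\mathbf{D}f)\le\Jstar$ for each $e$, hence $J^*_{\mathrm{ICR}}\le\Jstar$.

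\emph{Attainment.} The same computation for $\mathbf{D}f_C$ gives class moments $\mathbf{u}\mathbf{u}^\top$ and $\mathbf{v}\mathbf{v}^\top$. The class determinants are then $1+a$, and the marginal determinant is $1+a+a^2t/4$. This is maximised exactly at $t=1$, so $\mathbf{I}_2$ is feasible for $f_C$, and Lemma~\ref{lem:bound} shows it attains $\Jstar$.
\item Feasibility of $(f_S,\mathbf{I}_2)$ comes from item 1. The bound $J^*_{\mathrm{ICR}}-\sum_e w_eJ_e(f_S)\le\eta$ and the target error at least $1-\eta$ (with $g_0$ source-optimal) are imported directly from Theorem~\ref{thm:near}.
\end{enumerate}
For the reversed target, $\delta_{\mathrm{te}}=1-\delta_{\max}\neq1/2$, so $\mathbf{I}_2$ is again a maximiser there.

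The main obstacle is establishing the determinant identity cleanly, covering rank-deficient operators and non-orthogonal columns. Everything else reduces to monotonicity in $t$ and invoking earlier results.
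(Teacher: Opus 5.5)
Your proposal is correct and follows essentially the same route as the paper. Both proofs compute the moment matrices of $\mathbf{D}f_S$, obtain the determinants $1+a+a^2p(1-p)t$, read off the maximisers from strict monotonicity in $t$ (which holds exactly when $(\delta_e-1/2)^2>0$), and then derive the three consequences from Lemma~\ref{lem:bound} applied to $\mathbf{D}f$ and from Theorem~\ref{thm:near}. The only difference is cosmetic: you use Sylvester's identity for the determinants, whereas the paper uses $\det(\mathbf{I}_2+a\mathbf{A})=1+a\tr(\mathbf{A})+a^2\det(\mathbf{A})$ with $\det(\mathbf{A}_p)=p(1-p)\det(\mathbf{D}^\top\mathbf{D})$, and neither route inverts $\mathbf{D}$, so both cover the rank-one case uniformly.
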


\paragraph{Why shared optimality does not ensure stable prediction.}
Theorem~\ref{thm:shared} shows that the same coding operator is optimal for the environmental encoder $f_S$ in every training environment, even when the background--label correlations differ. It also remains optimal after the correlation reverses. However, the classifier still assigns labels according to the training association: when the environmental feature changes from usually matching the label to usually opposing it, these predictions become mostly incorrect.
Thus, an operator can remain optimal for the \mcr{} objective even though the relationship between the representation and the label has changed. The shared-optimality constraint does not rule out this failure. With positive noise, the environmental encoder can be arbitrarily close to maximising the constrained training objective while admitting an exactly optimal shared operator and a target prediction error arbitrarily close to one.

\subsection{Implications for incorporating invariance}
Our analysis takes a first step towards understanding whether the invariance principle can address the OOD limitations of \mcr{}. We consider a natural adaptation: requiring the same coding operator to be optimal across training environments. Yet even when this requirement holds exactly and the training objective is arbitrarily close to optimal, prediction can fail on almost every target sample.
The reason is that the coding operator can remain optimal even when the relationship between the environmental feature and the label reverses. The constraint favours well-separated representation directions, but does not ensure that those directions retain the same predictive meaning across environments.

It is worth noting that this result does not mean that invariance principle has no role to play in \mcr{}. It just shows that simply transferring the shared-optimality requirement widely used in previous work to the \mcr{} objective is not enough. To guarantee reliable OOD generalisation with \mcr{}, future approaches need additional new assumptions, learning principles, or regularisation that link representation geometry to stable predictive relationships across environments.

%% file: sections/discussion.tex
\section{Discussion and scope}
\label{sec:discussion}

\paragraph{What is specific to the \mcr{} objective?}
Distinguishing stable from unstable correlations is a challenge for many learning methods. Our analysis shows precisely how this problem arises for \mcr{}: we calculate the global optimum, the environmental encoder's gap from it, and all of its optimal coding operators. In particular, reversing the environmental correlation leaves the entire operator objective unchanged. This explains why a representation can perform well under the \mcr{} objective while supporting unreliable predictions. 

\paragraph{The same input support does not mean a small shift.}
In the identical-support construction, every target input can also occur during training. However, its probability can change substantially. For a source mismatch probability $\delta$ and its reversal, the target-to-source density ratio on mismatched inputs is $(1-\delta)/\delta$, which grows without bound as the objective gap approaches zero. Our result therefore does not rule out guarantees that also limit the density ratio or otherwise control the size of the shift. It shows that coding near-optimality alone, with coding precision fixed, cannot guarantee reliable prediction across the full family of shifts considered here.

\paragraph{Near-optimality and the solutions found in practice.}
With positive source noise, the stable encoder achieves a strictly higher objective than the environmental encoder. Indeed, in our finite model with full source support, every exact global optimiser admits zero source error, and any unrestricted source-optimal classifier also achieves zero target error; see Appendix~\ref{app:exact-stability}. Our negative result concerns representations that are arbitrarily close to optimal. Even with exact shared coding optimality, this closeness alone does not guarantee reliable target prediction across the shifts considered here. Whether gradient descent actually selects such representations is a separate question that our analysis does not answer. The exact-optimum failure at zero noise relies on a change in input support.

\paragraph{What additional information could help?}
Prediction is more stable when each class retains a similar representation distribution across environments. To make this precise, let $P$ and $Q$ have the same label prior $p_y$. For a fixed encoder $f$, suppose
\begin{equation}
	\operatorname{TV}\big(P(f(\mathbf{X})\mid Y=y),Q(f(\mathbf{X})\mid Y=y)\big)\leq\rho_y,
	\label{eq:tvassumption}
\end{equation}
where $\operatorname{TV}$ denotes total variation distance, a measure of the difference between two probability distributions. Thus, $\rho_y$ bounds how much the representation distribution of class $y$ changes across environments. Every fixed classifier $g$ then satisfies
\begin{equation}
	|\Risk_P(g\circ f)-\Risk_Q(g\circ f)|\leq\sum_y p_y\rho_y.
	\label{eq:tvbound}
\end{equation}
In other words, small changes in each class's representation distribution ensure a small change in classification error; see Appendix~\ref{app:tv} for the proof. However, equal coding rates alone do not ensure this stability: in our reversal example, the conditional total variation for each class is $|1-2\delta|$, which approaches one as $\delta$ approaches zero. Additional assumptions or information, such as paired interventions or known label-preserving transformations, may help establish the stability that the \mcr{} objective alone does not guarantee.

\paragraph{Scope of the model.}
Our construction uses two classes and two-dimensional unit-norm representations to make the failure mechanism explicit and allow exact analysis. Even in this basic setting, failure occurs despite the availability of an observed stable feature that perfectly predicts the label. This counterexample shows that the \mcr{} objective alone cannot guarantee reliable OOD generalisation. The two-dimensional setting makes this limitation analytically tractable, but does not imply that the underlying problem is confined to two dimensions. Besides, our shared-optimal-coding result shows that directly applying the invariance principle through a common optimal coding operator is insufficient, while leaving open other possible constraints that explicitly address predictive stability.

\section{Conclusion}
We studied the limits of \mcr{} for OOD generalisation and showed that well-separated representations alone do not ensure reliable prediction across environments. An encoder relying on an environmental feature can achieve the global optimum yet fail on every target sample when the feature--label correlation reverses and input support changes. Even when training and test inputs share the same support, the encoder can be arbitrarily close to optimal while making incorrect predictions on almost every target sample, despite the availability of a perfectly stable feature.
Requiring the same coding operator to be optimal across training environments does not resolve this problem: the operator can remain optimal even when the relationship between representations and labels reverses. Finite-sample experiments support these findings. We summarise that for \mcr{} to guarantee reliable OOD generalisation, additional new assumptions or learning principles are needed to ensure that predictive relationships remain stable across environments.

%% file: sections/statements.tex
\section*{Reproducibility statement}
The main text specifies the model, encoder class, and operator domain used in our theoretical analysis. Complete proofs and boundary cases are provided in the appendices, and Figure~\ref{fig:mechanism} illustrates the exact construction. Appendix~\ref{sec:numerical-validation} gives the sample sizes, parameter settings, and repetition counts for the numerical experiments, which evaluate prescribed encoders without training neural networks. Appendix~\ref{app:waterbirds-setup} describes how we train and evaluate the encoder on Waterbirds. Code for reproducing the experiments and generating the figures are included in the supplementary material.

\section*{AI use statement}
 AI tools (OpenAI  GPT-5 Sol and GPT-6 Astra) were used to polish the writing language, check mathematical proofs, and refine the \LaTeX{} source.

%% file: sections/appendix_overview.tex
\section*{Appendix overview}
The appendices provide background, proofs, and experimental details. The outline below indicates where to find each part and how it connects to the main text.

\medskip
\noindent\textbf{Appendix~\ref{sec:related}: Related work}\hfill p.~\pageref{sec:related}
\par\smallskip\noindent
Reviews \mcr{} and white-box models, followed by invariance-based approaches to OOD generalisation. It provides background for the motivation in Section~\ref{sec:intro}, the \mcr{} framework in Section~\ref{sec:setup}, and the invariance formulation in Section~\ref{sec:invariance}.

\medskip
\noindent\textbf{Appendix~\ref{app:proofs}: Proofs of the main results}\hfill p.~\pageref{app:proofs}
\par\smallskip\noindent
The proofs correspond to the following results and discussions in the main text:
\begin{center}
\small
\renewcommand{\arraystretch}{1.2}
\begin{tabular}{@{}p{0.13\linewidth}p{0.40\linewidth}p{0.40\linewidth}@{}}
\toprule
Appendix & Content & Corresponding main-text result \\
\midrule
\ref{app:bound} & Global \mcr{} optimum & Lemma~\ref{lem:bound}, Section~\ref{sec:failure} \\
\ref{app:exact} & Environmental encoder's objective and risk & Proposition~\ref{prop:exact}, Section~\ref{sec:failure} \\
\ref{app:near} & Near-optimality with identical support; noiseless boundary & Theorem~\ref{thm:near} and Section~\ref{sec:reversal} \\
\ref{app:exact-stability} & Exact optimisation under common support & Exact versus near-optimal solutions, Section~\ref{sec:discussion} \\
\ref{app:shared} & Shared optimal coding operator & Theorem~\ref{thm:shared}, Section~\ref{sec:invariance} \\
\ref{app:tv} & Prediction-risk bound from class-conditional stability & Equation~\eqref{eq:tvbound}, Section~\ref{sec:discussion} \\
\bottomrule
\end{tabular}
\end{center}

\medskip
\noindent\textbf{Appendix~\ref{app:additional}: Additional consequences and distinctions}\hfill p.~\pageref{app:additional}
\par\smallskip\noindent
Expands on Sections~\ref{sec:setup}--\ref{sec:discussion}: the information retained by second moments, a limitation of odd encoders, pooling versus averaging source objectives, the distinction between shared classification decisions and shared probability estimates, and the domain of admissible coding operators.

\medskip
\noindent\textbf{Appendix~\ref{sec:numerical-validation}: Numerical validation}\hfill p.~\pageref{sec:numerical-validation}
\par\smallskip\noindent
Checks the theoretical predictions using finite samples:
\begin{itemize}
\setlength{\itemsep}{2pt}
\setlength{\parskip}{0pt}
\item Figure~\ref{fig:numerical-reversal}: correlation reversal, corresponding to Proposition~\ref{prop:exact} and Section~\ref{sec:reversal}.
\item Figures~\ref{fig:numerical-gap} and~\ref{fig:numerical-sample-size}: near-optimality with identical support and the effect of sample size, corresponding to Theorem~\ref{thm:near}.
\item Figure~\ref{fig:numerical-shared}: shared operator optimality despite target failure, corresponding to Theorem~\ref{thm:shared}.
\item Table~\ref{tab:numerical-boundary}: the exact-optimum and near-optimum failure regimes discussed in Section~\ref{sec:failure}.
\end{itemize}

\medskip
\noindent\textbf{Appendix~\ref{app:waterbirds-setup}: Waterbirds experimental setup}\hfill p.~\pageref{app:waterbirds-setup}
\par\smallskip\noindent
Documents the data splits, preprocessing, encoder training, linear-probe evaluation, and saliency computation underlying the Waterbirds study in Section~\ref{sec:setup}, Table~\ref{tab:waterbirds-test}, and Figure~\ref{fig:MCR2-spurious}.
\clearpage

%% file: sections/related.tex
\section{Related work}
\label{sec:related}

\subsection{Maximal coding rate reduction and white-box models}

Coding-rate methods connect the structure of data with the number of bits needed to represent them. \citet{ma2007segmentation} use lossy coding to segment multivariate data by minimising their total coding length. Building on this idea, \citet{yu2020learning} introduce \mcr{} as an objective that encourages diversity within each class and separation between classes. Under suitable dimensional and precision conditions, their geometric analysis characterises orthogonal class subspaces and within-class dimensionality. \citet{wang2024geometry} further study the critical points and optimisation landscape of a regularised coding-rate formulation, providing a global account of its representation geometry. To reduce the computational cost of coding-rate optimisation, \citet{baek2022efficient} develop variational formulations that avoid repeated evaluation of class-wise log-determinants.

The emphasis on low-dimensional structure is supported by studies of intrinsic dimension~\citep{hein2005intrinsic,pope2021intrinsic} and low-dimensional models for high-dimensional data~\citep{wright2022high}. Related analyses of kernel learning connect learning curves to the smoothness and effective dimension of the data~\citep{spigler2020asymptotic}. These results help explain why representation geometry matters for learning, but do not establish that the represented features remain predictive under environmental changes. In a different setting, \citet{locatello2019challenging} show that unsupervised disentanglement requires assumptions about the model and data. Their result concerns the identification of latent factors; ours concerns the stability of label relationships under a supervised coding objective.

The coding-rate perspective also provides a basis for white-box network design, following the broader approach of deriving interpretable networks by unrolling optimisation algorithms~\citep{monga2021algorithm}. \citet{chan2022redunet} derive ReduNet by unfolding the optimisation of coding-rate reduction into network layers. \citet{yu2024white} develop sparse rate reduction and introduce CRATE, a white-box Transformer whose components have explicit roles in compressing and sparsifying representations. Later work improves the scalability of CRATE through CRATE-$\alpha$~\citep{yang2024scaling} and derives Attention-only Transformers by unfolding subspace denoising~\citep{wang2025attention}. Coding-rate principles have also been applied to incremental learning~\citep{wu2021incremental,tong2023incremental}, joint discriminative and generative learning~\citep{dai2022ctrl,tong2024unsupervised}, image clustering~\citep{chu2024image}, and self-supervised learning~\citep{wu2025simplifying}.

These studies establish the value of coding-rate principles for learning structured representations and designing interpretable architectures. Our work examines the predictive reliability of these representations when the environment changes. In domain generalisation, \citet{atghaei2026consistency} combine coding-rate reduction with classification and feature-consistency objectives. Our analysis clarifies what the coding objective itself guarantees, as well as the limits of a direct shared-optimal-coding extension. The architectural developments provide context for the importance of coding-rate methods; our OOD results concern the objective and the specified coding constraint. This distinction helps identify the additional information needed for reliable prediction across environments.

\subsection{The invariance principle and OOD generalisation}

Domain generalisation studies how to learn from source environments and predict in unseen ones~\citep{zhou2023domain}. Models can learn environmental cues that predict labels during training but become unreliable in a new environment~\citep{geirhos2020shortcut,sagawa2020distributionally}. \citet{sagawa2020investigation} study how overparameterisation can worsen performance on groups where spurious correlations do not hold. \citet{nagarajan2021understanding} identify geometric and statistical factors that lead gradient-based learning to exploit such correlations. These studies examine how training produces unstable predictors; our analysis asks whether the coding objective and its shared-optimality constraint exclude them.

Causal approaches connect generalisation to assumptions about which mechanisms remain stable when data distributions change~\citep{schoelkopf2012causal}. \citet{muandet2013domain} learn domain-invariant features while preserving predictive information. \citet{rojas2018invariant} study transfer through subsets of predictors whose relationship with the target is invariant across tasks. Anchor regression provides robustness against specified distribution shifts by using observed exogenous variables~\citep{rothenhaeusler2021anchor}. These approaches make explicit the information or assumptions used to support transfer.

The invariance principle seeks predictive relationships that remain valid across environments. \citet{peters2016causal} formalise invariant prediction for causal inference, linking stable predictive relationships to causal structure under suitable assumptions. Building on this principle, \citet{arjovsky2019invariant} propose IRM, which learns a representation for which the same predictor is optimal in every training environment. REx instead encourages similar prediction risks across environments, aiming to improve robustness to shifts beyond those observed during training~\citep{krueger2021out}. Related work formulates invariant learning as a game among environments~\citep{ahuja2020invariant}.

Several studies examine the conditions needed for these ideas to support OOD generalisation. \citet{rosenfeld2021risks} show that IRM constraints can allow predictors based on environmental features that fail in sufficiently different test environments. \citet{kamath2021does} show that practical IRM formulations can fail to capture the intended invariances, even in simple population settings. \citet{ahuja2021invariance} identify limitations of invariance alone in classification and show how an information-bottleneck constraint can help under their assumptions. Empirical studies also highlight the importance of evaluation and model selection when comparing domain-generalisation methods~\citep{gulrajani2021search}. WILDS documents naturally occurring distribution shifts across diverse applications~\citep{koh2021wilds}, while \citet{wiles2022fine} compare methods across different types of shift and show that their relative performance depends on the setting.

Our work brings this analysis to the coding-rate framework. We study a direct formulation in which the invariance principle translates into a shared coding operator that is optimal across training environments. The same operator can satisfy this requirement even when the relationship between the representation and the label reverses. This result distinguishes agreement on coding geometry from stability of predictive relationships, and identifies a limit of this particular way of introducing invariance into \mcr{}.

%% file: sections/proofs.tex
\section{Proofs of the main results}
\label{app:proofs}
This appendix gives the complete derivations of the coding bound, the environmental encoder's objective and risk, the identical-support construction, and the shared-optimal-coding result. Throughout the main construction, $a=2/\epsilon^2>0$ is fixed. All encoders and classifiers are measurable; measurability is automatic for functions on the finite input domain used in the construction.

\subsection{Sharp binary coding bound}
\label{app:bound}
\begin{proof}[Proof of Lemma~\ref{lem:bound}]
Fix an environment and abbreviate $\mathbf{M}_y=\mathbf{M}_{e,y}(f)$ and $\mathbf{M}=(\mathbf{M}_0+\mathbf{M}_1)/2$. The second identity uses the balanced label prior. We first establish separate bounds on the expansion and compression terms, then identify when both bounds are attained.

\paragraph{Step 1: Properties of the second moments.}
For every $\mathbf{q}\in\R^2$,
\begin{equation}
\mathbf{q}^\top\mathbf{M}_y\mathbf{q}
=\E_e[(\mathbf{q}^\top f(\mathbf{X}))^2\mid Y=y]\geq0.
\end{equation}
Thus each conditional moment is symmetric and positive semidefinite. Linearity of trace and the unit-norm constraint give
\begin{equation}
\tr(\mathbf{M}_y)
=\E_e[\tr(f(\mathbf{X})f(\mathbf{X})^\top)\mid Y=y]
=\E_e[\|f(\mathbf{X})\|_2^2\mid Y=y]=1.
\end{equation}
The same properties hold for their average $\mathbf{M}$. In particular, all matrices inside the log-determinants are positive definite because $a>0$, so the objective is finite.

\paragraph{Step 2: Bound on the expansion term.}
Let $\lambda_1,\lambda_2$ be the eigenvalues of $\mathbf{M}$. They are nonnegative and sum to one. The spectral decomposition and the arithmetic--geometric mean inequality imply
\begin{align}
\det(\mathbf{I}_2+a\mathbf{M})
&=(1+a\lambda_1)(1+a\lambda_2)\notag\\
&\leq\left(\frac{(1+a\lambda_1)+(1+a\lambda_2)}2\right)^2\notag\\
&=\left(1+\frac a2(\lambda_1+\lambda_2)\right)^2
=(1+a/2)^2.
\label{eq:proof-global}
\end{align}
Consequently, $\tfrac12\log\det(\mathbf{I}_2+a\mathbf{M})\leq\log(1+a/2)$. Since $a>0$, equality holds precisely when $\lambda_1=\lambda_2=1/2$. A symmetric matrix with these eigenvalues is $\mathbf{I}_2/2$, so this equality condition is independent of the choice of eigenbasis.

\paragraph{Step 3: Bound on each compression term.}
For a two-dimensional matrix, the determinant expansion gives
\begin{equation}
\det(\mathbf{I}_2+a\mathbf{M}_y)
=1+a\tr(\mathbf{M}_y)+a^2\det(\mathbf{M}_y)
=1+a+a^2\det(\mathbf{M}_y)\geq1+a.
\label{eq:proof-conditional}
\end{equation}
The last inequality follows because both eigenvalues of $\mathbf{M}_y$ are nonnegative. Equality holds if and only if its determinant is zero. Together with trace one, this forces its eigenvalues to be $1$ and $0$. Equivalently,
\begin{equation}
\mathbf{M}_y=\mathbf{v}_y\mathbf{v}_y^\top,
\qquad \|\mathbf{v}_y\|_2=1.
\end{equation}
Thus equality in the conditional bound means that the second moment is a rank-one orthogonal projector.

\paragraph{Step 4: Combine the bounds and characterise equality.}
With the two class weights equal to $1/2$, the objective is
\begin{equation}
J_e(f)=\frac12\log\det(\mathbf{I}_2+a\mathbf{M})
-\frac14\sum_{y=0}^1\log\det(\mathbf{I}_2+a\mathbf{M}_y).
\end{equation}
Applying the preceding inequalities gives $J_e(f)\leq\log(1+a/2)-\tfrac12\log(1+a)=\Jstar$. To verify that no slack can cancel, write
\begin{align}
\Jstar-J_e(f)
&=\left[\log(1+a/2)-\frac12\log\det(\mathbf{I}_2+a\mathbf{M})\right]\notag\\
&\quad+\frac14\sum_{y=0}^1
\left[\log\det(\mathbf{I}_2+a\mathbf{M}_y)-\log(1+a)\right].
\end{align}
Every bracket is nonnegative. The total is zero if and only if every bracket is zero. Hence equality requires $\mathbf{M}=\mathbf{I}_2/2$ and $\mathbf{M}_y=\mathbf{v}_y\mathbf{v}_y^\top$ for both labels. These conditions imply
\begin{equation}
\mathbf{v}_0\mathbf{v}_0^\top+\mathbf{v}_1\mathbf{v}_1^\top=\mathbf{I}_2.
\end{equation}
Taking the quadratic form along $\mathbf{v}_0$ yields
\begin{equation}
1+(\mathbf{v}_0^\top\mathbf{v}_1)^2=1,
\end{equation}
so $\mathbf{v}_0^\top\mathbf{v}_1=0$. Conversely, two orthogonal unit vectors form an orthonormal basis of $\R^2$. Their projectors sum to $\mathbf{I}_2$, and both the expansion and compression bounds are attained. This proves the stated necessary and sufficient equality condition.

\paragraph{Step 5: Attainment in every training environment.}
In model~\eqref{eq:model}, $C=Y$ almost surely, so $f_C(\mathbf{X})=\mathbf{b}_Y$. Its conditional moments are $\mathbf{b}_y\mathbf{b}_y^\top$ in every environment, regardless of $\delta_e$. Therefore $J_e(f_C)=\Jstar$ for every $e$. For any other $f\in\cF$,
\begin{equation}
J_{\mathrm{tr}}(f)=\sum_e w_eJ_e(f)\leq\sum_e w_e\Jstar=\Jstar.
\end{equation}
The encoder $f_C$ belongs to $\cF$ and attains this bound. Thus the bound is the global maximum over the full encoder class, rather than only over the two encoders displayed in the construction.
\end{proof}

\subsection{Exact coding value and source-optimal head}
\label{app:exact}
\begin{proof}[Proof of Proposition~\ref{prop:exact}]
Fix an environment with mismatch probability $\delta\in[0,1]$. We calculate the complete distribution on the two output codes before evaluating the objective and prediction risk.

\paragraph{Step 1: Conditional and marginal code distributions.}
Since $S=Y\oplus B_e$ and $B_e$ is independent of $Y$,
\begin{equation}
P_e(S=s\mid Y=y)=
\begin{cases}
1-\delta,&s=y,\\
\delta,&s\neq y.
\end{cases}
\end{equation}
The balanced label prior implies, for each $s\in\{0,1\}$,
\begin{equation}
P_e(S=s)=\frac12(1-\delta)+\frac12\delta=\frac12.
\end{equation}
Because $f_S(\mathbf{X})=\mathbf{b}_S$, the representation marginal is therefore uniform on the two codes for every value of $\delta$, including the endpoints. The conditional and global moments are
\begin{equation}
\mathbf{M}_0=\begin{pmatrix}1-\delta&0\\0&\delta\end{pmatrix},\qquad
\mathbf{M}_1=\begin{pmatrix}\delta&0\\0&1-\delta\end{pmatrix},\qquad
\mathbf{M}=\frac12\mathbf{I}_2.
\end{equation}

\paragraph{Step 2: Evaluate the log-determinants.}
The global determinant is $(1+a/2)^2$. Each conditional determinant is
\begin{align}
(1+a\delta)(1+a(1-\delta))
&=1+a\delta+a(1-\delta)+a^2\delta(1-\delta)\notag\\
&=1+a+a^2\delta(1-\delta).
\end{align}
Substituting into the population objective and combining the two identical conditional terms gives
\begin{align}
J_\delta(f_S)
&=\log(1+a/2)-\frac12\log\big(1+a+a^2\delta(1-\delta)\big)\notag\\
&=\log(1+a/2)-\frac12\log(1+a)
-\frac12\log\left(1+\frac{a^2}{1+a}\delta(1-\delta)\right)\notag\\
&=\Jstar-\gap(\delta).
\end{align}
Here we factored $1+a$ from the conditional determinant; the factor is strictly positive. The formula holds on the entire interval $[0,1]$. In particular, the gap vanishes at $\delta=0$ and $\delta=1$, is strictly positive for $0<\delta<1$, and is unchanged by replacing $\delta$ with $1-\delta$.

\paragraph{Step 3: Risk of the fixed classifier.}
The classifier $g_0$ reads the code index, so $g_0(f_S(\mathbf{X}))=S$. Its error event is exactly $\{B_e=1\}$, giving
\begin{equation}
\Risk_e(g_0\circ f_S)=P_e(S\neq Y)=P_e(B_e=1)=\delta_e.
\end{equation}
For the stable encoder, $g_0(f_C(\mathbf{X}))=C=Y$ almost surely. Its risk is zero in every source and target environment.

\paragraph{Step 4: Optimality on the weighted source mixture.}
Let $P_{\mathrm{tr}}=\sum_e w_eP_e$ and $\bar\delta=\sum_e w_e\delta_e$. Minimising the weighted source risk is equivalent to minimising risk under this mixture. For either code index $s$,
\begin{align}
P_{\mathrm{tr}}(Y=s,S=s)&=\frac12\sum_e w_e(1-\delta_e)=\frac12(1-\bar\delta),\notag\\
P_{\mathrm{tr}}(Y=1-s,S=s)&=\frac12\sum_e w_e\delta_e=\frac12\bar\delta.
\end{align}
As $P_{\mathrm{tr}}(S=s)=1/2$, Bayes' rule gives
\begin{equation}
P_{\mathrm{tr}}(Y=s\mid S=s)=1-\bar\delta,
\qquad P_{\mathrm{tr}}(Y=1-s\mid S=s)=\bar\delta.
\end{equation}
Predicting $s$ at $\mathbf{b}_s$ has conditional error $\bar\delta$, whereas predicting $1-s$ has conditional error $1-\bar\delta$. If $\bar\delta<1/2$, the former is strictly smaller at both codes. More explicitly, for any deterministic binary classifier $g$, let $m(g)$ be the number of codes on which $g(\mathbf{b}_s)\neq s$. Then
\begin{equation}
\Risk_{\mathrm{tr}}(g\circ f_S)
=\bar\delta+\frac{1-2\bar\delta}{2}m(g).
\end{equation}
This is uniquely minimised on the representation support by $m(g)=0$, which is exactly $g_0$. Allowing randomised decisions does not lower the minimum: their conditional errors are convex combinations of the two errors just calculated. Values of $g$ outside the two-code support do not affect its risk. Finally, $g_0(\mathbf{z})=\mathbf1\{z_2>z_1\}$ implements the optimal assignments on the basis codes, so a linear decision boundary attains the unrestricted optimum.
\end{proof}

\subsection{Near-optimality with identical input support}
\label{app:near}
\begin{proof}[Proof of Theorem~\ref{thm:near}]
We first establish the quantitative gap estimate for arbitrary small source mismatch probabilities. We then give an explicit choice of environments and verify all requirements of the theorem.

\paragraph{Step 1: Bound the coding gap.}
For $x\geq0$, $\log(1+x)\leq x$. Applying this with $x=a^2\delta(1-\delta)/(1+a)$ yields
\begin{equation}
0\leq\gap(\delta)\leq\frac{a^2}{2(1+a)}\delta(1-\delta)
\leq\frac{a^2}{2(1+a)}\delta,
\qquad 0\leq\delta\leq1.
\end{equation}
By Lemma~\ref{lem:bound} and Proposition~\ref{prop:exact},
\begin{align}
\max_{f\in\cF}J_{\mathrm{tr}}(f)-J_{\mathrm{tr}}(f_S)
&=\Jstar-\sum_e w_e[\Jstar-\gap(\delta_e)]\notag\\
&=\sum_e w_e\gap(\delta_e)\notag\\
&\leq\frac{a^2}{2(1+a)}\sum_e w_e\delta_e\notag\\
&\leq\frac{a^2}{2(1+a)}\delta_{\max}.
\end{align}
The final step uses positive weights summing to one and $\delta_e\leq\delta_{\max}$. This proves Equation~\eqref{eq:gapbound} without requiring equal source weights.

\paragraph{Step 2: Choose the source and target environments.}
Fix the prescribed $a>0$, finite positive integer $E$, source weights, and $\eta>0$. Define
\begin{equation}
r=\min\left\{\frac14,\frac\eta2,\frac{(1+a)\eta}{a^2}\right\}>0,
\qquad
\delta_e=\frac{e}{E}r\quad(e=1,\ldots,E),\qquad
\delta_{\mathrm{te}}=1-r.
\end{equation}
All three terms defining $r$ are strictly positive. Hence $0<r\leq1/4$, and the source parameters satisfy
\begin{equation}
0<\delta_1<\cdots<\delta_E=r<\frac12.
\end{equation}
For $E=1$, this simply means $\delta_1=r$. The target parameter lies in $[3/4,1)$, so its environmental feature is more often incorrect than correct. The largest source mismatch probability is $\delta_{\max}=r$.

\paragraph{Step 3: Verify objective quality and source optimality.}
The gap estimate and $r\leq(1+a)\eta/a^2$ give
\begin{equation}
0\leq\max_{f\in\cF}J_{\mathrm{tr}}(f)-J_{\mathrm{tr}}(f_S)
\leq\frac{a^2r}{2(1+a)}\leq\frac\eta2\leq\eta.
\end{equation}
Also, $\bar\delta=\sum_e w_e\delta_e\leq r<1/2$. Proposition~\ref{prop:exact} therefore establishes that $g_0$ is source-optimal on $f_S$. No target labels are used to select this classifier.

\paragraph{Step 4: Verify target failure and the stable alternative.}
Keeping the same classifier at test time gives
\begin{equation}
\Risk_{\mathrm{te}}(g_0\circ f_S)=\delta_{\mathrm{te}}=1-r
\geq1-\frac\eta2\geq1-\eta.
\end{equation}
At the same time, $\Risk_{\mathrm{te}}(g_0\circ f_C)=0$. The failure thus occurs despite the availability of an encoder and classifier that are perfectly accurate in every environment.

\paragraph{Step 5: Verify common support.}
For an environment with parameter $\delta$, the input probabilities are
\begin{equation}
\begin{array}{c|cccc}
(c,s)&(0,0)&(0,1)&(1,0)&(1,1)\\\hline
P_e(C=c,S=s)&(1-\delta)/2&\delta/2&\delta/2&(1-\delta)/2.
\end{array}
\end{equation}
Each source has $0<\delta_e<1$ and the target has $0<1-r<1$. Every entry is therefore strictly positive in every environment, and every input support equals $\{0,1\}^2$. Since $Y=C$ deterministically, the joint support of $(\mathbf{X},Y)$ is also identical: it consists of the four triples $((c,s),c)$. The theorem concerns a change in probabilities and predictive relationships on this shared support, rather than the appearance of previously impossible inputs.

These checks establish the two inequalities, source optimality, the stable zero-risk alternative, and common support for any specified finite number of sources and any specified positive weights. The encoder $f_S$ itself is the same throughout; only the environment parameters are chosen as a function of the requested tolerance.
\end{proof}

\paragraph{The exact-optimum boundary.}
If every source has $\delta_e=0$, then $f_C=f_S$ almost surely on every source, and both attain $\Jstar$. A target with $\delta_{\mathrm{te}}=1$ has $S=1-Y$, so $g_0\circ f_S$ is wrong with probability one. The source input support is $\{(0,0),(1,1)\}$, while the target support is $\{(0,1),(1,0)\}$. This is why the exact boundary example cannot be used as a proof of exact-optimum failure with identical input support. In the positive-noise construction, $\gap(\delta_e)>0$ for every source, so $f_C$ is strictly better in objective value than $f_S$. The theorem establishes the insufficiency of a small objective gap, not inevitable failure of an exact optimiser.

\paragraph{Why precision is fixed in the theorem.}
The coefficient $a^2/[2(1+a)]$ depends on coding precision. For a fixed $\delta\in(0,1)$, the exact gap is $\tfrac12\log(1+a^2\delta(1-\delta)/(1+a))$, which grows without bound as $a\to\infty$. The theorem instead fixes any $a>0$ and then selects sufficiently small source mismatch probabilities. Its quantifier order is therefore important: it does not assert uniform near-optimality in precision at a fixed noise level.

\subsection{Exact optimisation under common support}
\label{app:exact-stability}
The positive-noise result concerns representations arbitrarily close to the optimum. The following consequence of Lemma~\ref{lem:bound} makes the distinction from exact optimisation explicit.

\begin{proposition}[Exact optimisers in the full-support model]
\label{prop:exact-stability}
In model~\eqref{eq:model}, suppose $0<\delta_e<1$ in every training environment and all source weights are positive. If $f\in\cF$ attains $J_{\mathrm{tr}}(f)=\Jstar$, then every source-optimal classifier over all functions on the representation has zero risk in every target environment from the same model, for any $\delta_{\mathrm{te}}\in[0,1]$. The same statement holds for the representation $\mathbf{D}f$ of any feasible pair attaining $J^*_{\mathrm{ICR}}$.
\end{proposition}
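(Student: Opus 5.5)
Since $J_e(f)\leq\Jstar$ for every environment by Lemma~\ref{lem:bound} and all weights are positive, $J_{\mathrm{tr}}(f)=\Jstar$ forces $J_e(f)=\Jstar$ in every source. The equality case of Lemma~\ref{lem:bound} then gives, in each source, orthogonal unit vectors $\mathbf{v}_0^e,\mathbf{v}_1^e$ with $\mathbf{M}_{e,y}(f)=\mathbf{v}_y^e\mathbf{v}_y^{e\top}$. First I will turn this moment condition into a pointwise statement about the codes. The matrix $\mathbf{M}_{e,y}$ equals $\E_e[f f^\top\mid Y=y]$, and each summand is a rank-one PSD matrix with trace one. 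A convex combination of such matrices is rank one only if every term with positive weight is the same projector. So for every input $\mathbf{x}$ with $P_e(\mathbf{X}=\mathbf{x}\mid Y=y)>0$ we get $f(\mathbf{x})=\pm\mathbf{v}_y^e$.

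Next I use full support. Because $0<\delta_e<1$, the table in Equation~\eqref{eq:support} shows that class $y$ (where $C=y$) puts positive mass on both $(y,0)$ and $(y,1)$. Hence $f(y,0),f(y,1)\in\{\pm\mathbf{v}_y\}$. The direction $\mathbf{v}_y$ does not depend on $e$, since it is pinned down by the codes themselves. Orthogonality of $\mathbf{v}_0$ and $\mathbf{v}_1$ then gives a key fact: the code sets $A_0=\{f(0,0),f(0,1)\}$ and $A_1=\{f(1,0),f(1,1)\}$ are disjoint, because $\pm\mathbf{v}_0\neq\pm\mathbf{v}_1$. So $f(\mathbf{X})$ determines $Y$ exactly. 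In other words, $f$ separates inputs by $C$, with each class's codes lying on its own line.

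For the classifier step, I consider any $g$ minimising the source risk, or equivalently the risk under the mixture $P_{\mathrm{tr}}$. Such a $g$ must predict $y$ at every code in $A_y$ that has positive source mass, and every code in $A_0\cup A_1$ has positive mass by full support. Predicting $y$ on these codes achieves zero source risk. Any $g$ that disagrees at a positive-mass code therefore has strictly positive risk, so every source-optimal $g$ agrees with the label map on $A_0\cup A_1$. Randomised heads must also put probability one on the correct label there. In any target environment, $Y=C$ still holds and the inputs are still $\{0,1\}^2$. So $f(\mathbf{X})\in A_Y$ almost surely, and the target risk is zero for every $\delta_{\mathrm{te}}\in[0,1]$, including the endpoints where part of the support vanishes. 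The main obstacle I expect is stating the rank-one step rigorously; I would argue it via $\det$ of a convex combination, or by noting that $\mathbf{q}^\top\mathbf{M}\mathbf{q}=0$ for $\mathbf{q}\perp\mathbf{v}_y$ forces $\mathbf{q}^\top f(\mathbf{x})=0$ on the support.

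For the $\icr$ claim, Theorem~\ref{thm:shared} gives $J^*_{\mathrm{ICR}}=\Jstar$. A feasible pair attaining it has a representation $\mathbf{D}f$ that lies in $\cF$, since $\mathbf{D}\in\cD(f)$ keeps the codes unit-norm on all of $\cX$, and satisfies $J_{\mathrm{tr}}(\mathbf{D}f)=\Jstar$. Applying the argument above to $\tilde f=\mathbf{D}f$ finishes the proof.
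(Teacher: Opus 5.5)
Your proposal is correct and matches the paper's proof step for step. It derives equality in every source from the positive weights, then uses the quadratic form along $\mathbf{q}\perp\mathbf{v}_y$ with full support to put both inputs of class $y$ on $\pm\mathbf{v}_y$; disjointness of the orthogonal class lines then forces every source-optimal head to be correct on all four inputs, and the same argument applied to $\mathbf{D}f\in\cF$ gives the \icr{} claim. Your extra remark that $\mathbf{v}_y$ does not depend on $e$ is true but unnecessary, since the paper simply fixes one source environment.
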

\begin{proof}
We use the equality condition of the coding bound to identify which representations an exact optimiser can assign to each input.

\paragraph{Step 1: Equality holds in each source.}
By Lemma~\ref{lem:bound}, $\Jstar-J_e(f)\geq0$ for every $e$. Since all $w_e>0$ and
\begin{equation}
0=\Jstar-J_{\mathrm{tr}}(f)=\sum_e w_e[\Jstar-J_e(f)],
\end{equation}
each term must be zero. Choose any source environment. The equality characterisation gives unit vectors $\mathbf{v}_0,\mathbf{v}_1$ such that $\mathbf{v}_0\perp\mathbf{v}_1$ and $\mathbf{M}_{e,y}(f)=\mathbf{v}_y\mathbf{v}_y^\top$.

\paragraph{Step 2: Every input of a class maps to its class line.}
Fix $y\in\{0,1\}$ and a unit vector $\mathbf{q}_y\perp\mathbf{v}_y$. Since $C=Y$, the conditional moment identity implies
\begin{equation}
0=\mathbf{q}_y^\top\mathbf{M}_{e,y}(f)\mathbf{q}_y
=\sum_{s=0}^1 P_e(S=s\mid Y=y)
\big(\mathbf{q}_y^\top f(y,s)\big)^2.
\end{equation}
Both conditional probabilities are strictly positive because $0<\delta_e<1$. Each summand is nonnegative, so both squares vanish. Thus $f(y,s)$ lies on the line spanned by $\mathbf{v}_y$. The unit-norm condition further gives
\begin{equation}
f(y,s)\in\{\mathbf{v}_y,-\mathbf{v}_y\},\qquad s\in\{0,1\}.
\end{equation}
The two class lines are orthogonal, so their unit-vector sets are disjoint. The encoder may still use $s$ to choose a sign, but that choice cannot move an input onto the other class line.

\paragraph{Step 3: Source-optimal predictions are correct on all possible inputs.}
A classifier assigning label $y$ to each attained code in $\{\mathbf{v}_y,-\mathbf{v}_y\}$ has zero source risk. Hence the optimal source risk over all classifiers is zero. Every input in $\{0,1\}^2$ has positive source probability, so any source-optimal classifier must classify the representation of every one of these inputs correctly. Otherwise its source risk would be strictly positive.

A target environment in the stated family changes only the probabilities assigned to these inputs; it retains the deterministic label $Y=C$. The same classifier is therefore correct on every target input, including when $\delta_{\mathrm{te}}=0$ or $1$ reduces the target support. Its target risk is zero.

\paragraph{Step 4: Exact constrained optima.}
Theorem~\ref{thm:shared} establishes $J^*_{\mathrm{ICR}}=\Jstar$. For a feasible maximising pair, let $h(\mathbf{x})=\mathbf{D}f(\mathbf{x})$. Admissibility ensures $h\in\cF$, and attainment gives $\sum_e w_eJ_e(h)=\Jstar$. Applying the preceding steps to $h$ proves the final assertion.
\end{proof}

This proposition concerns an unrestricted source-optimal classifier. A linear classifier need not separate two classes that each occupy both signs of their respective lines. For the particular encoders $f_C$ and $f_S$, however, the linear rule in Section~\ref{sec:setup} attains the source optimum as stated. The proposition also uses the finite model and full source support; it does not extend the zero-risk conclusion to arbitrary input spaces or label mechanisms.

\subsection{Shared coding optimality}
\label{app:shared}
\begin{proof}[Proof of Theorem~\ref{thm:shared}]
We characterise the entire inner operator optimisation, then verify feasibility and objective quality in the outer problem.

\paragraph{Step 1: Characterise the admissible operators.}
Both $\mathbf{b}_0$ and $\mathbf{b}_1$ occur in the image of $f_S$ on the fixed input domain $\cX$. For a linear operator $\mathbf{D}=[\mathbf{u},\mathbf{v}]$, its outputs on these codes are $\mathbf{D}\mathbf{b}_0=\mathbf{u}$ and $\mathbf{D}\mathbf{b}_1=\mathbf{v}$. Thus the requirement $\|\mathbf{D}f_S(\mathbf{x})\|_2=1$ for all $\mathbf{x}\in\cX$ is equivalent to $\|\mathbf{u}\|_2=\|\mathbf{v}\|_2=1$. The same argument applies to $f_C$. This uses the full input domain specified in the operator constraint, including when an individual environment has a smaller support.

Write $\gamma=\mathbf{u}^\top\mathbf{v}$ and $t=1-\gamma^2$. Cauchy--Schwarz gives $0\leq t\leq1$. Every value in this interval is attainable: choosing $\mathbf{u}=(1,0)^\top$ and $\mathbf{v}=(\sqrt{1-t},\sqrt t)^\top$ produces unit columns with the required value of $t$. In particular, $t=0$ allows parallel or antiparallel columns, and $t=1$ means orthogonal columns.

\paragraph{Step 2: Compute the moments after applying the operator.}
For $p\in[0,1]$, define
\begin{equation}
\mathbf{A}_p=p\mathbf{u}\mathbf{u}^\top+(1-p)\mathbf{v}\mathbf{v}^\top
=\mathbf{D}\diag(p,1-p)\mathbf{D}^\top.
\end{equation}
It is positive semidefinite and has trace $p\|\mathbf{u}\|_2^2+(1-p)\|\mathbf{v}\|_2^2=1$. Its determinant is
\begin{align}
\det(\mathbf{A}_p)
&=p(1-p)(\det\mathbf{D})^2\notag\\
&=p(1-p)\det(\mathbf{D}^\top\mathbf{D})\notag\\
&=p(1-p)\det\begin{pmatrix}1&\gamma\\\gamma&1\end{pmatrix}
=p(1-p)t.
\end{align}
No inverse of $\mathbf{D}$ is used, so these identities remain valid for singular operators and for $p=0$ or $p=1$. The two-dimensional determinant identity now gives
\begin{equation}
\det(\mathbf{I}_2+a\mathbf{A}_p)=1+a+a^2p(1-p)t.
\end{equation}
The marginal code probabilities are $1/2,1/2$, so the global moment of $\mathbf{D}f_S$ is $\mathbf{A}_{1/2}$. Its class-conditional moments are $\mathbf{A}_{1-\delta_e}$ and $\mathbf{A}_{\delta_e}$. Their determinants are equal, although the matrices need not be equal.

\paragraph{Step 3: Evaluate the full operator objective.}
Substituting these moments into the population objective gives
\begin{align}
J_e(\mathbf{D}f_S)
&=\frac12\log(1+a+a^2t/4)
-\frac12\log(1+a+a^2\delta_e(1-\delta_e)t)\notag\\
&=\frac12\log\left(
\frac{1+a+(a^2/4)t}{1+a+a^2\delta_e(1-\delta_e)t}\right).
\end{align}
This proves Equation~\eqref{eq:operator-value}. In particular, all dependence on the two unit columns is through $t$. All logarithm arguments are positive, so differentiation is valid throughout $[0,1]$, with one-sided derivatives at its endpoints.

\paragraph{Step 4: Identify every maximising operator.}
Let $A=1+a$, $B=a^2/4$, and $C=a^2\delta_e(1-\delta_e)$. Here these are scalar constants. Direct differentiation gives
\begin{align}
\frac{\mathrm d}{\mathrm dt}\left[\frac12\log\frac{A+Bt}{A+Ct}\right]
&=\frac12\left(\frac{B}{A+Bt}-\frac{C}{A+Ct}\right)\notag\\
&=\frac{B(A+Ct)-C(A+Bt)}{2(A+Bt)(A+Ct)}\notag\\
&=\frac{A(B-C)}{2(A+Bt)(A+Ct)}.
\end{align}
Substituting these constants gives
\begin{equation}
\frac{\partial J_e(\mathbf{D}f_S)}{\partial t}
=\frac{a^2(1+a)\big(1/4-\delta_e(1-\delta_e)\big)}
{2\big(1+a+a^2t/4\big)\big(1+a+a^2\delta_e(1-\delta_e)t\big)}.
\label{eq:derivative}
\end{equation}
The denominator is strictly positive, and
\begin{equation}
\frac14-\delta_e(1-\delta_e)=\left(\delta_e-\frac12\right)^2.
\end{equation}
If $\delta_e\neq1/2$, the objective is strictly increasing in $t$. Because $t=1$ is attainable, its maximising operators are exactly those with orthogonal unit columns. If $\delta_e=1/2$, numerator and denominator inside the logarithm are identical, and $J_e(\mathbf{D}f_S)=0$ for every admissible operator. This proves both cases in the theorem, including the complete characterisation of the maximiser set.

\paragraph{Step 5: Verify common optimality, including under reversal.}
The identity $\mathbf{I}_2$ has orthogonal unit columns. It is therefore optimal in every environment with $\delta_e\neq1/2$ and is also a maximiser when $\delta_e=1/2$. Consequently, it belongs to the intersection of all source argmax sets, independently of the number or distinctness of their mismatch probabilities. If at least one source has $\delta_e\neq1/2$, that intersection is exactly the set of orthogonal operators; if all have $\delta_e=1/2$, it is the entire admissible operator set.

Moreover, replacing $\delta_e$ by $1-\delta_e$ leaves $\delta_e(1-\delta_e)$ unchanged. It therefore preserves the full objective function of $\mathbf{D}$, not merely its maximum value. In particular, $\mathbf{I}_2$ remains optimal on the reversed target distribution even though the source classifier's predictions become mostly incorrect there.

\paragraph{Step 6: Determine the outer constrained optimum.}
For the stable encoder, conditional on $Y=0$ the output after applying $\mathbf{D}$ is $\mathbf{u}$, and conditional on $Y=1$ it is $\mathbf{v}$. The preceding computation with $\delta=0$ gives
\begin{equation}
J_e(\mathbf{D}f_C)=\frac12\log\frac{1+a+a^2t/4}{1+a}.
\end{equation}
This function is strictly increasing in $t$ and does not depend on the environment. Hence $(f_C,\mathbf{I}_2)$ is feasible in Equation~\eqref{eq:icr}. At $t=1$, the numerator equals $(1+a/2)^2$, giving $J_e(f_C)=\Jstar$ in every source.

Conversely, for every admissible pair $(f,\mathbf{D})$, the composition $h(\mathbf{x})=\mathbf{D}f(\mathbf{x})$ is a unit-norm encoder into $\R^2$ on the whole input domain. Lemma~\ref{lem:bound} applies to $h$ in each balanced environment. Thus every feasible pair satisfies
\begin{equation}
\sum_e w_eJ_e(\mathbf{D}f)\leq\sum_e w_e\Jstar=\Jstar.
\end{equation}
Since the feasible pair $(f_C,\mathbf{I}_2)$ attains this bound, the constrained supremum is attained and $J^*_{\mathrm{ICR}}=\Jstar$.

\paragraph{Step 7: Verify the near-optimal failing feasible solution.}
Apply the positive-noise environments constructed in Theorem~\ref{thm:near}. Step 5 establishes exact shared optimality of $\mathbf{I}_2$ for $f_S$, so $(f_S,\mathbf{I}_2)$ is feasible. Its gap from the constrained optimum is
\begin{equation}
J^*_{\mathrm{ICR}}-\sum_e w_eJ_e(\mathbf{I}_2f_S)
=\Jstar-J_{\mathrm{tr}}(f_S)\leq\eta.
\end{equation}
Applying the identity does not change the representation seen by the classifier. The source-optimal rule remains $g_0$, and its target error remains at least $1-\eta$. All support and stable-feature properties are inherited from the same construction. This proves the three assertions and the target-optimality statement.
\end{proof}

\paragraph{Why the operator comparison is nontrivial.}
For example, $\mathbf{D}=[\mathbf{b}_0,\mathbf{b}_0]$ is admissible on the two-code support but maps both codes to the same vector. It has $t=0$ and coding-rate reduction zero. The identity operator has $t=1$ and strictly positive rate reduction whenever $\delta\neq1/2$. The shared-maximiser statement therefore compares genuinely different coding geometries. It is stronger than saying that all orthogonal coordinate changes preserve an objective, but it remains a statement about the explicit domain $\cD(f)$.

\subsection{A sufficient distributional condition for fixed-head stability}
\label{app:tv}
\begin{proof}[Proof of Equation~\eqref{eq:tvbound}]
Let $P$ and $Q$ have the same label prior $p_y$. For a fixed encoder $f$, denote the conditional representation laws by $\mu_y=P(f(\mathbf{X})\in\cdot\mid Y=y)$ and $\nu_y=Q(f(\mathbf{X})\in\cdot\mid Y=y)$. Labels of zero prior can be omitted. We use the convention
\begin{equation}
\operatorname{TV}(\mu,\nu)=\sup_{A}|\mu(A)-\nu(A)|,
\end{equation}
where the supremum ranges over measurable sets. For a fixed measurable classifier $g$, define the conditional error set $A_y=\{\mathbf{z}:g(\mathbf{z})\neq y\}$. The law of total probability gives
\begin{equation}
\Risk_P(g\circ f)=\sum_y p_y\mu_y(A_y),
\qquad \Risk_Q(g\circ f)=\sum_y p_y\nu_y(A_y).
\end{equation}
Subtracting these expressions, applying the triangle inequality, and then using the definition of total variation gives
\begin{align}
|\Risk_P(g\circ f)-\Risk_Q(g\circ f)|
&=\left|\sum_y p_y[\mu_y(A_y)-\nu_y(A_y)]\right|\notag\\
&\leq\sum_y p_y|\mu_y(A_y)-\nu_y(A_y)|\notag\\
&\leq\sum_y p_y\operatorname{TV}(\mu_y,\nu_y)\notag\\
&\leq\sum_y p_y\rho_y.
\end{align}
The last step is exactly the class-conditioned stability assumption in Equation~\eqref{eq:tvassumption}. The bound applies to any fixed classifier, without requiring source optimality. Equal label priors allow the same weights to appear in the two risk decompositions; matching a scalar coding value is not used in this derivation.

To check sharpness in the reversal construction, the two conditional measures for either class have masses $(1-\delta,\delta)$ and $(\delta,1-\delta)$ on the two basis codes, with their order depending on the class. Total variation on this two-point support is
\begin{equation}
\frac12\left(|(1-\delta)-\delta|+|\delta-(1-\delta)|\right)=|1-2\delta|.
\end{equation}
The fixed classifier $g_0$ has risks $\delta$ and $1-\delta$, whose absolute difference is also $|1-2\delta|$. With class weights $1/2,1/2$, the right-hand side of the bound is exactly the same quantity. Thus the bound is attained in this example, despite identical coding values before and after reversal.
\end{proof}

%% file: sections/secondary.tex
\section{Additional consequences and distinctions}
\label{app:additional}
\subsection{The second-moment information retained by the objective}
\begin{proposition}[Moment equivalence]
\label{prop:moment}
Fix the representation dimension, precision, and label prior in an environment. If two encoders $f,h$ have $\mathbf{M}_{e,y}(f)=\mathbf{M}_{e,y}(h)$ for every label, then $J_e(f)=J_e(h)$. If the equalities hold in every source environment, the two weighted source objectives are equal.
\end{proposition}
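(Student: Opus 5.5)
The plan is to note that the population objective in Equation~\eqref{eq:population} depends on the encoder only through its class-conditional second moments. Once the representation dimension $d$, the precision (hence $a=d/\epsilon^2$), and the label prior $p_y^e$ are fixed, the objective $J_e(f)$ is a deterministic function of the tuple $(\mathbf{M}_{e,y}(f))_y$. Equality of these tuples for $f$ and $h$ then gives equality of objectives by substitution.

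First, the mixture moment is determined by the conditional moments. By Equation~\eqref{eq:moments}, $\mathbf{M}_e(f)=\sum_y p_y^e\mathbf{M}_{e,y}(f)$. Since the label prior belongs to the environment and not to the encoder, the hypothesis $\mathbf{M}_{e,y}(f)=\mathbf{M}_{e,y}(h)$ for every $y$ gives $\mathbf{M}_e(f)=\mathbf{M}_e(h)$.

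Second, substitute into Equation~\eqref{eq:population}. The expansion term $\tfrac12\log\det(\mathbf{I}_d+a\mathbf{M}_e(\cdot))$ agrees for $f$ and $h$. The compression term agrees summand by summand, because each summand is $p_y^e\log\det(\mathbf{I}_d+a\mathbf{M}_{e,y}(\cdot))$ with identical inputs. Step~1 of the proof of Lemma~\ref{lem:bound} shows that these moments are positive semidefinite, and the argument carries over verbatim to dimension $d$. Since $a>0$, every matrix inside a log-determinant is then positive definite, so every term is finite and no $\infty-\infty$ ambiguity arises. It follows that $J_e(f)=J_e(h)$.

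Third, if the hypothesis holds in every $e\in\cE$, apply the previous step in each environment. Taking the weighted sum $J_{\mathrm{tr}}=\sum_e w_eJ_e$ from Equation~\eqref{eq:trainrisk} then gives equal source objectives.

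There is no real obstacle here. The only point needing care is that the prior $p_y^e$ is common to both encoders, so the mixture moment does not have to be assumed equal separately.
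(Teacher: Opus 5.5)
Your proof is correct and follows the same route as the paper: the common prior makes the mixture moments equal, the log-determinant terms then coincide term by term, and the weighted source sum follows. Your remark that every log-determinant is finite, via positive semidefiniteness and $a>0$, is a small extra justification that the paper's proof leaves implicit.
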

\begin{proof}
Write the common class moments as $\mathbf{N}_y=\mathbf{M}_{e,y}(f)=\mathbf{M}_{e,y}(h)$. Since both encoders are evaluated on the same environment, their label priors are the same. Equation~\eqref{eq:moments} therefore gives
\begin{equation}
\mathbf{M}_e(f)=\sum_y p_y^e\mathbf{N}_y=\mathbf{M}_e(h).
\end{equation}
The representation dimension and precision are fixed, so the same identity matrix and scalar $a$ are used in both objectives. The global log-determinants coincide by the displayed identity, and each conditional log-determinant coincides by the assumed equality of the corresponding class moments. Thus
\begin{align}
J_e(f)-J_e(h)
&=\frac12\left[\log\det(\mathbf{I}_d+a\mathbf{M}_e(f))
-\log\det(\mathbf{I}_d+a\mathbf{M}_e(h))\right]\notag\\
&\quad-\frac12\sum_y p_y^e\left[\log\det(\mathbf{I}_d+a\mathbf{M}_{e,y}(f))
-\log\det(\mathbf{I}_d+a\mathbf{M}_{e,y}(h))\right]=0.
\end{align}
If the assumed class-moment equalities hold separately in every source, the same argument gives $J_e(f)=J_e(h)$ for every source. Multiplication by the common source weights and summation then yield $J_{\mathrm{tr}}(f)=J_{\mathrm{tr}}(h)$. No equality of the full conditional representation distributions is needed for this conclusion.
\end{proof}
This observation is an algebraic property, not by itself an OOD impossibility theorem. Equal second moments do not specify the complete class-conditioned distributions. Conversely, the main reversal construction does not have equal \emph{class-indexed} moments across source and target: it swaps them. Its equal coding values follow from that symmetry rather than from Proposition~\ref{prop:moment}.

\subsection{Odd encoders and reflected classes}
\begin{proposition}[A restricted encoder-class blind spot]
\label{prop:odd}
Suppose binary classes satisfy $\mathbf{X}\mid Y=-1\overset{d}{=}-\mathbf{X}\mid Y=+1$. If an encoder obeys $f(-\mathbf{x})=-f(\mathbf{x})$ on the relevant support and has finite second moments, then its two conditional second moments are equal and its population coding-rate reduction is zero, for any class prior and any positive coding precision.
\end{proposition}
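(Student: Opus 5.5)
The plan is to show that the odd symmetry of the encoder, combined with the reflection relation between the classes, forces the two class-conditional second moments to coincide. Once they coincide, the population objective collapses to zero through an exact cancellation of log-determinants. No inequality or optimisation is needed; the whole argument is a change of variables followed by algebra.

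First, I will compare $\mathbf{M}_{e,-1}(f)$ and $\mathbf{M}_{e,+1}(f)$. By the distributional assumption, $f(\mathbf{X})$ conditioned on $Y=-1$ has the same law as $f(-\mathbf{X}')$, where $\mathbf{X}'$ is distributed as $\mathbf{X}\mid Y=+1$. Oddness on the relevant support gives $f(-\mathbf{X}')=-f(\mathbf{X}')$. Since $(-\mathbf{z})(-\mathbf{z})^\top=\mathbf{z}\mathbf{z}^\top$, this yields
\[
\mathbf{M}_{e,-1}(f)=\E[f(-\mathbf{X}')f(-\mathbf{X}')^\top]=\E[f(\mathbf{X}')f(\mathbf{X}')^\top]=\mathbf{M}_{e,+1}(f).
\]
Finite second moments make both expectations well defined. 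The step to be careful about is that ``relevant support'' must include both $\mathbf{x}$ and $-\mathbf{x}$ for $\mathbf{x}$ in the support of the $+1$ class. This is guaranteed because the $-1$ class is supported on the reflection of the $+1$ class, so oddness is only ever invoked at points where it is assumed.

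Second, I will write $\mathbf{N}$ for the common moment. Then $\mathbf{M}_e(f)=p_{+1}\mathbf{N}+p_{-1}\mathbf{N}=\mathbf{N}$ for any class prior, since the priors sum to one. Substituting into Equation~\eqref{eq:population} gives
\[
J_e(f)=\tfrac12\log\det(\mathbf{I}_d+a\mathbf{N})-\tfrac12(p_{+1}+p_{-1})\log\det(\mathbf{I}_d+a\mathbf{N})=0.
\]
The log-determinants are finite because $\mathbf{N}$ is positive semidefinite and $a>0$, so this holds for every positive precision. If a class has zero prior, its term drops out and the same conclusion follows trivially. This proposition does not require unit-norm outputs; only finiteness of the second moments is used.

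The main obstacle is minor: it is justifying the change of variables rigorously, namely that equality in distribution transfers through the measurable map $\mathbf{x}\mapsto f(\mathbf{x})f(\mathbf{x})^\top$. This is standard, since expectations of a fixed measurable function depend only on the law.
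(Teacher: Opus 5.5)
Your proposal is correct and follows essentially the same route as the paper. The paper also uses the reflection identity together with the evenness of $\mathbf{x}\mapsto f(\mathbf{x})f(\mathbf{x})^\top$ (induced by oddness) to equate the two class moments, then notes that the global moment equals this common matrix, so the log-determinants cancel for any prior, with the zero-prior case handled the same way.
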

\begin{proof}
Let $\mathbf{H}(\mathbf{x})=f(\mathbf{x})f(\mathbf{x})^\top$. Oddness of $f$ makes this matrix-valued function even:
\begin{equation}
\mathbf{H}(-\mathbf{x})
=(-f(\mathbf{x}))(-f(\mathbf{x}))^\top
=f(\mathbf{x})f(\mathbf{x})^\top=\mathbf{H}(\mathbf{x}).
\end{equation}
Finite second moments ensure that its entries are integrable. The assumed reflection identity of the conditional input laws therefore implies
\begin{align}
\mathbf{M}_{-1}
&=\E[f(\mathbf{X})f(\mathbf{X})^\top\mid Y=-1]\notag\\
&=\E[f(-\mathbf{X})f(-\mathbf{X})^\top\mid Y=+1]\notag\\
&=\E[f(\mathbf{X})f(\mathbf{X})^\top\mid Y=+1]
=\mathbf{M}_{+1}.
\end{align}
Denote this common matrix by $\mathbf{N}$ and the two label probabilities by $p_{-1}$ and $p_{+1}$. Their sum is one, so the global moment is
\begin{equation}
\mathbf{M}=p_{-1}\mathbf{N}+p_{+1}\mathbf{N}=\mathbf{N}.
\end{equation}
For any $a>0$, $\mathbf{I}_d+a\mathbf{N}$ is positive definite. Substitution into the objective gives
\begin{equation}
J(f)=\frac12\log\det(\mathbf{I}_d+a\mathbf{N})
-\frac12(p_{-1}+p_{+1})\log\det(\mathbf{I}_d+a\mathbf{N})=0.
\end{equation}
Thus the cancellation does not require balanced classes. For nondegenerate priors both conditional laws are used; if a class has zero prior, the objective is also zero because its single positive-weight conditional moment equals the global moment.
\end{proof}
For a concrete example, let $Y$ be uniform on $\{-1,+1\}$ and $\mathbf{X}\mid Y=y\sim\mathcal N(y\boldsymbol{\mu},\sigma^2\mathbf{I}_d)$, with $\boldsymbol{\mu}\neq0$ and $\sigma>0$. The encoder $f(\mathbf{X})=\mathbf{X}/\|\mathbf{X}\|_2$ is well-defined almost surely and odd. Its coding-rate reduction is zero. Nevertheless, the rule $g(\mathbf{z})=\operatorname{sign}(\boldsymbol{\mu}^\top \mathbf{z})$ has error
\begin{equation}
P(Y\boldsymbol{\mu}^\top \mathbf{X}<0)=\Phi(-\|\boldsymbol{\mu}\|_2/\sigma),
\end{equation}
where $\Phi$ is the standard normal cumulative distribution function. To see this directly, write $\mathbf{X}=Y\boldsymbol{\mu}+\sigma\boldsymbol{\xi}$ with $\boldsymbol{\xi}\sim\mathcal N(\mathbf{0},\mathbf{I}_d)$ independent of $Y$. Multiplication by the positive scalar $1/\|\mathbf{X}\|_2$ does not change the sign of the classifier score. Moreover,
\begin{equation}
Y\boldsymbol{\mu}^\top\mathbf{X}
=\|\boldsymbol{\mu}\|_2^2+\sigma Y\boldsymbol{\mu}^\top\boldsymbol{\xi}
\sim\mathcal N(\|\boldsymbol{\mu}\|_2^2,\sigma^2\|\boldsymbol{\mu}\|_2^2).
\end{equation}
The distribution is continuous, so the zero-score event has probability zero. Standardising the score gives the displayed error formula, which approaches zero as $\|\boldsymbol{\mu}\|_2/\sigma\to\infty$. The example shows that predicting a label from a representation and distinguishing its classes through uncentred second moments are different properties. It does not show that a general nonlinear encoder is unable to learn a representation with positive coding-rate reduction. Non-odd feature maps can change these moments. This separate observation is therefore not used as a proof of the main OOD result.

\subsection{Pooling sources versus averaging their objectives}
The main text uses $\sum_e w_eJ_e(f)$. A second possible training objective is $J_{\mathrm{pool}}(f)$, computed after pooling sources with weights $w_e$. These are generally different because $\log\det$ is nonlinear. For the environmental encoder in the present model, pooling simply replaces $\delta_e$ with $\bar\delta=\sum_e w_e\delta_e$. Therefore
\begin{equation}
J_{\mathrm{pool}}(f_S)=\Jstar-\gap(\bar\delta),\qquad
J_{\mathrm{pool}}(f_C)=\Jstar.
\end{equation}
As $\bar\delta\leq\delta_{\max}$, the bound in Equation~\eqref{eq:gapbound} also bounds the pooled gap. The identical-support near-optimality construction thus applies to either objective, but their numerical values should not be identified for general encoders.

\subsection{Source probability estimates and IRM}
For $f_S$ in environment $e$, the Bayes conditional probability is
\begin{equation}
P_e(Y=1\mid f_S(\mathbf{X})=\mathbf{b}_0)=\delta_e,\qquad
P_e(Y=1\mid f_S(\mathbf{X})=\mathbf{b}_1)=1-\delta_e.
\end{equation}
When every $\delta_e<1/2$, these probabilities induce the same optimal $0$--$1$ rule. They are not equal as probability functions when the source mismatch probabilities differ. For $0<\delta_e<1$, the logistic-optimal logits are
\begin{equation}
\log\frac{\delta_e}{1-\delta_e}\quad\text{at }\mathbf{b}_0,\qquad
\log\frac{1-\delta_e}{\delta_e}\quad\text{at }\mathbf{b}_1.
\end{equation}
At $\delta_e=0$ or $1$, the corresponding logistic loss approaches its infimum only as the logits diverge; there is no finite optimal logit. Thus the environmental encoder in the distinct-source construction does not automatically satisfy a common logistic-optimal classifier condition. Agreement of decision boundaries should not be substituted for the appropriate loss-specific optimality requirement. In contrast, the coding operator $\mathbf{I}_2$ in Theorem~\ref{thm:shared} is exactly optimal in every source despite these changing conditional probabilities.

\subsection{The domain of admissible coding operators}
Equation~\eqref{eq:operator-domain} is a representation-dependent constraint. It requires unit output norm on the fixed input domain, as in the \mcr{} objective. For a general encoder image it may be more restrictive than for the two-code encoders. The proof of Theorem~\ref{thm:shared} uses only the exact identity in Equation~\eqref{eq:columns} for $f_C,f_S$ and the fact that every admissible $\mathbf{D}f$ remains unit-norm. It does not require a characterisation of $\cD(f)$ for every possible encoder.

One can instead apply a linear map and then normalise each output. For the two-code encoder, any nonzero columns again induce an arbitrary pair of unit output vectors after normalisation, giving the same inner optimisation over $(\mathbf{u},\mathbf{v})\in\sphere\times\sphere$. The stated theorem uses Equation~\eqref{eq:operator-domain} to avoid undefined normalisation at zero. It makes no statement about an unconstrained linear operator whose outputs can grow without bound.

%% file: sections/experiments.tex
\clearpage
\section{Numerical validation of the theoretical constructions}
\label{sec:numerical-validation}
We use finite-sample experiments to examine three predictions of our analysis: coding quality can survive correlation reversal, a small coding gap can coexist with severe prediction failure, and a shared optimal coding operator need not stabilise prediction.
The experiments evaluate the stable and environmental encoders defined in Equation~\eqref{eq:encoders}, and fit the classifier using source samples only. We draw
$200{,}000$ independent observations per environment and repeat each setting
with $50$ random seeds. We sample the four input-cell counts directly from their
multinomial law, which is equivalent to sampling individual observations and
then counting them. Coding values are computed from empirical second moments
and log-determinants, independently of the closed-form reference curves.
Unless otherwise stated, error bars denote one standard deviation across seeds. The source classifier minimises empirical $0$--$1$ risk over the two output codes and remains fixed during testing. The coding coefficient is $a=2/\epsilon^2$.

\paragraph{Correlation reversal preserves coding quality but changes prediction.}
We fit a classifier in a source environment with mismatch probability $0.01$
and hold it fixed while varying the target mismatch probability.
Figure~\ref{fig:numerical-reversal} shows agreement between the empirical coding
values, prediction errors, and theoretical curves. At $a=2$, the mean coding
values at target mismatch probabilities $0.01$ and $0.99$ are $0.137318$ and
$0.137250$, respectively, compared with the common population value $0.137284$.
The corresponding mean prediction errors are $0.995\%$ and $98.995\%$.
The stable encoder has zero prediction error throughout. Thus, the coding
objective can retain its value while the source label assignment becomes
unreliable. The noiseless endpoints have different input support; the
positive-noise settings share the same population input support.

\begin{figure}[htbp]
\centering
\includegraphics[width=\linewidth]{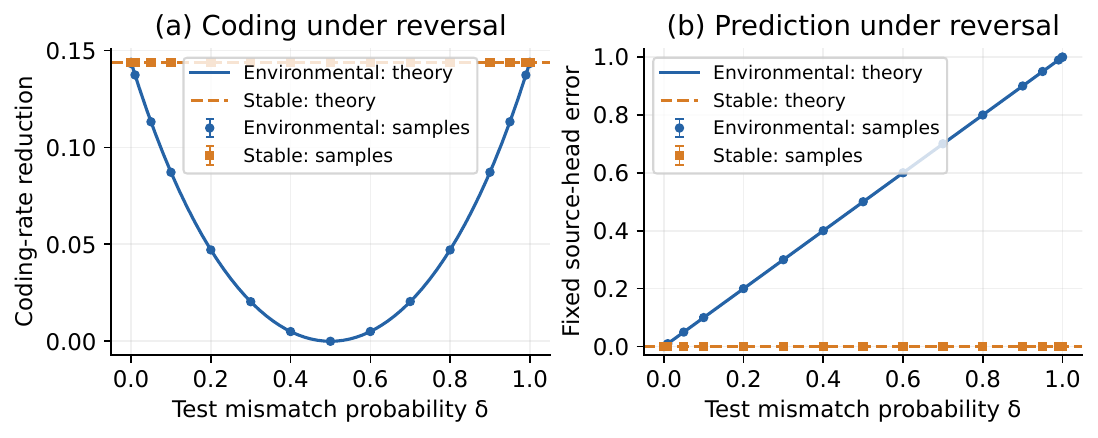}
\caption{Correlation reversal preserves coding quality while prediction fails.
Lines show theory; points show empirical means with one-standard-deviation error bars.}
\label{fig:numerical-reversal}
\end{figure}

\paragraph{A small coding gap coexists with almost maximal target error.}
We use three equally weighted source environments with mismatch probabilities
$\delta_{\max}/4$, $\delta_{\max}/2$, and $\delta_{\max}$, and a target environment
with mismatch probability $1-\delta_{\max}$. Figure~\ref{fig:numerical-gap}
compares the measured coding gap with the theoretical prediction for
$a\in\{0.5,2,8\}$. As $\delta_{\max}$ decreases, the coding gap decreases while
the target error increases. At $a=2$ and $\delta_{\max}=0.001$, the mean gap to
the population global optimum is $3.868\times10^{-4}$, compared with the
predicted $3.884\times10^{-4}$, and the mean target error is $99.900\%$.
All four possible inputs occur in every source and target sample in this
experiment; the smallest observed cell count is $15$. This rules out missing
sample support as an explanation of the observed failure in these runs.
The benchmark is the population optimum, not an asserted empirical optimum
for each randomly imbalanced sample.

\begin{figure}[htbp]
\centering
\includegraphics[width=\linewidth]{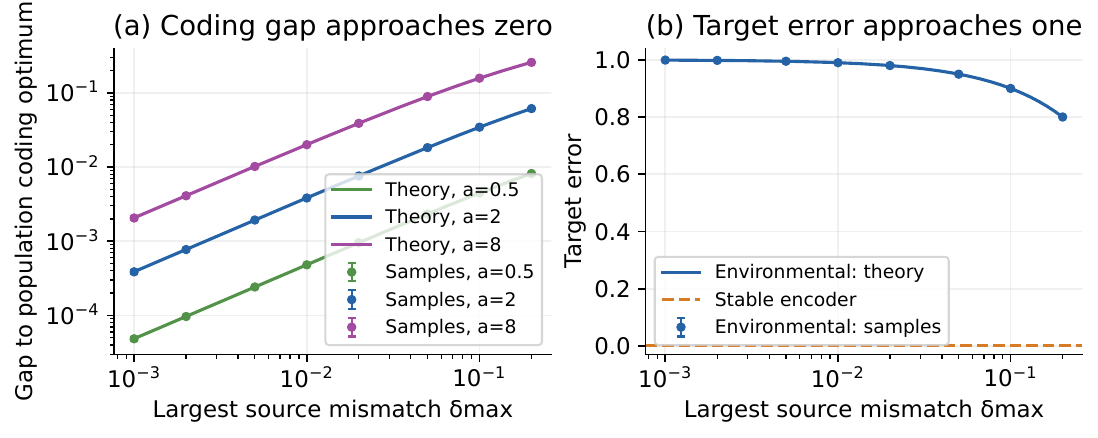}
\caption{Near-optimal coding and almost maximal target error coexist with identical input support.}
\label{fig:numerical-gap}
\end{figure}

\paragraph{A shared optimal coding operator does not stabilise prediction.}
We next use source mismatch probabilities $0.005$, $0.01$, and $0.02$, with a
reversed target at $0.99$. For each environment, we evaluate operators with two
unit columns whose relative angle ranges from $0^\circ$ to $180^\circ$ in
$2^\circ$ increments. This family includes nonorthogonal and rank-one operators,
not only rotations. In every seed and environment, the empirical grid maximum
occurs at $90^\circ$, consistent with the theorem's shared-optimality result.
Nevertheless, the fixed source classifier has approximately $99\%$ target error
at the identity operator (Figure~\ref{fig:numerical-shared}). Agreement on the
best coding operator therefore does not establish a stable label assignment.

\begin{figure}[htbp]
\centering
\includegraphics[width=\linewidth]{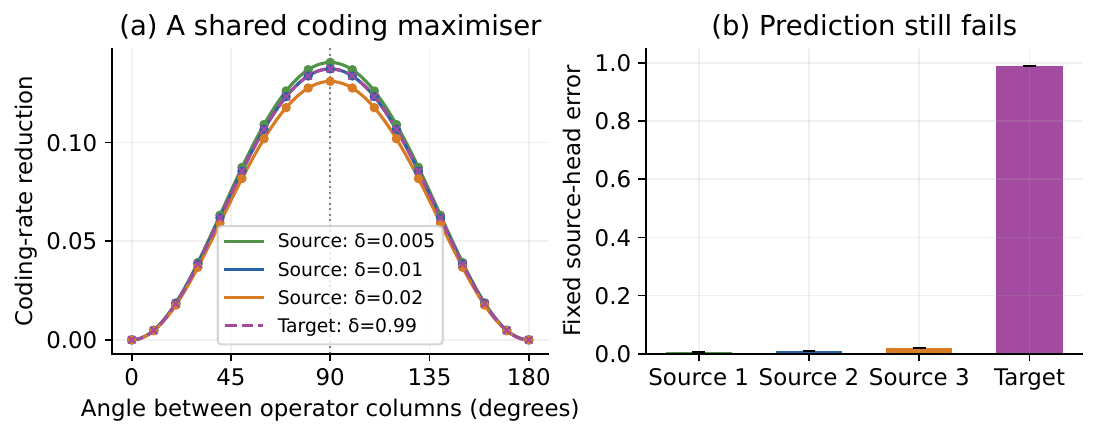}
\caption{A common coding maximiser coexists with severe target prediction failure.}
\label{fig:numerical-shared}
\end{figure}

\paragraph{More samples improve estimation, not target prediction.}
To separate estimation error from prediction failure, we fix the three source
mismatch probabilities at $0.00025$, $0.0005$, and $0.001$, the target mismatch
at $0.999$, and $a=2$. We vary the sample size per environment from $10^3$ to
$10^6$, again using $50$ independent seeds at each size.
Figure~\ref{fig:numerical-sample-size} shows the root mean squared error (RMSE)
of the empirical coding gap and target error relative to their population
predictions, computed across seeds. The coding-gap RMSE decreases from
$2.82\times10^{-4}$ to $1.10\times10^{-5}$, and the target-error RMSE decreases
from $1.05\times10^{-3}$ to $3.02\times10^{-5}$. Meanwhile, mean target error
remains between $99.874\%$ and $99.905\%$, close to the predicted $99.9\%$.
The stable encoder has zero error in every run.

Small samples can miss rare inputs even though the population support is shared.
We therefore record whether all four inputs are observed in every source and
target environment, without rejecting or resampling incomplete draws. This
occurs in $0/50$, $21/50$, $50/50$, and $50/50$ runs at the four increasing
sample sizes. At $10^6$ samples, the least frequent cell still contains at
least $103$ observations in every environment and run. Thus, in these larger
samples, severe target failure persists even when every possible input has
been observed. Increasing sample size improves agreement with the theory
without repairing the unstable predictive relationship.

\begin{figure}[!t]
\centering
\includegraphics[width=\linewidth]{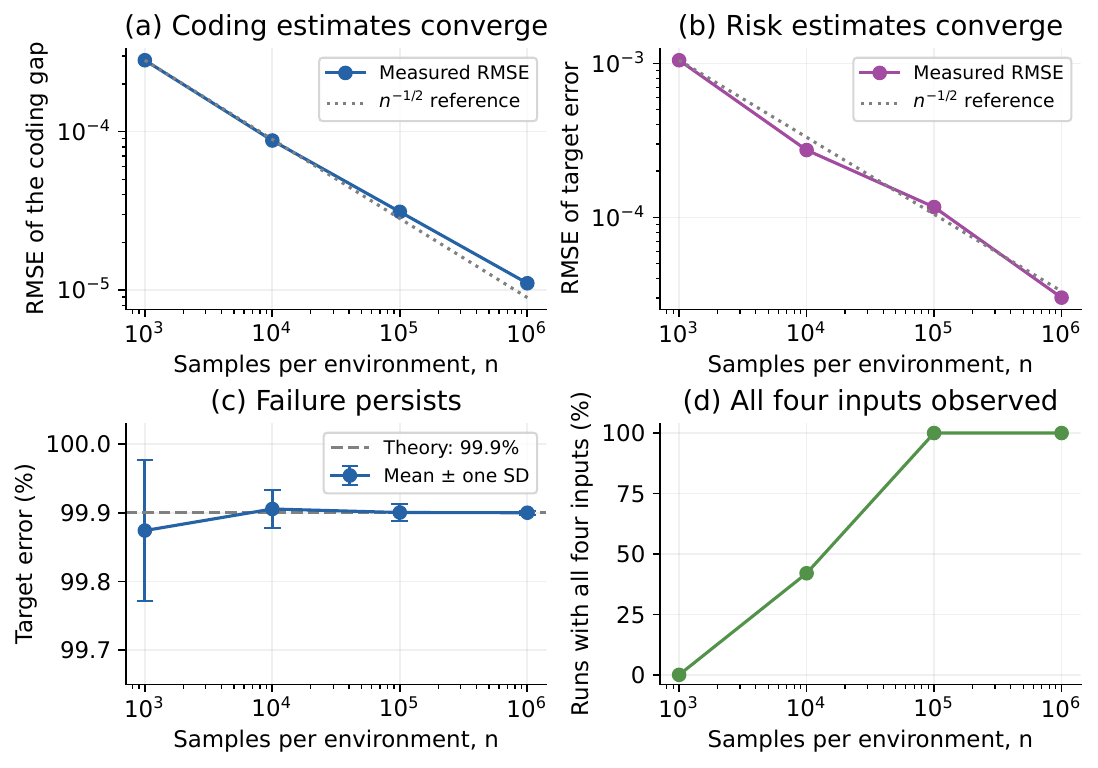}
\caption{Larger samples improve estimation while target failure persists.
RMSE is measured against the population predictions. Dotted lines in (a,b)
are $n^{-1/2}$ references, not fitted rates. Panel (c) shows means and one
standard deviation; panel (d) reports the fraction of runs observing every
input in all four environments.}
\label{fig:numerical-sample-size}
\end{figure}

\paragraph{Exact and near-optimal coding describe different cases.}
We also check the noiseless boundary separately, with source mismatch zero
and target mismatch one. Across $50$ runs of $200{,}000$ samples per
environment, the stable and environmental encoders have identical empirical
source coding values, while the environmental source classifier has zero
source error and $100\%$ target error. In this case, the source and target
supports differ. Table~\ref{tab:numerical-boundary} contrasts this boundary
with the positive-noise, identical-support construction. The exact global
optimality statement is a population result from the theory; numerical
samples need not have an exactly balanced label prior.

\begin{table}[htbp]
\centering
\small
\caption{The two failure regimes. Coding guarantees refer to the population;
errors are empirical means over $50$ runs of $200{,}000$ samples. The positive-noise row uses $\delta_{\max}=0.001$ and $a=2$.}
\label{tab:numerical-boundary}
\vspace{4pt}
\begin{tabular}{@{}llll@{}}
\toprule
Setting & Input support & Coding guarantee & Target error \\
\midrule
Noiseless reversal & Changes & Exactly optimal & $100\%$ \\
Positive noise & Identical & Near-optimal & $99.900\%$ \\
\bottomrule
\end{tabular}
\end{table}

Together, these experiments show agreement between the theoretical predictions
and finite-sample measurements. They evaluate prescribed encoders, so they do
not establish how frequently an encoder-training algorithm selects an unstable
representation. The shared-operator experiment concerns the constraint in
Equation~\eqref{eq:icr}, rather than other formulations of invariance.

%% file: sections/waterbirds_setup.tex
\section{Waterbirds experimental setup}
\label{app:waterbirds-setup}

This appendix describes the Waterbirds data pipeline, encoder training, linear-probe evaluation, and saliency computation in the supplied implementation. Hyperparameters listed below are the defaults of \texttt{Train-MCR2.py} in the anonymous supplementary code package.

\paragraph{Dataset and environment construction.}
Waterbirds~\citep{sagawa2020distributionally} provides bird images with land or water backgrounds. We use the supplied images and metadata rather than generating new composites or assigning new backgrounds. The bird label is $y\in\{0,1\}$, with $0$ denoting landbirds and $1$ denoting waterbirds; the background label is $p\in\{0,1\}$, with $0$ denoting land and $1$ denoting water. The metadata define four groups $(y,p)$. In the training split, approximately $95\%$ of each bird class has a matching background ($p=y$), and the remaining $5\%$ has a mismatched background ($p\neq y$). All four groups are pooled into a single training dataset. The encoder loss uses bird labels $y$; background labels $p$ are used to define evaluation groups. We retain the original metadata split assignments: \texttt{split=0} for training and \texttt{split=2} for testing. The implementation does not use the validation split.

The background-mismatched evaluation uses the two existing test groups $(y,p)=(0,1)$ and $(1,0)$, containing $2{,}255$ and $642$ images, respectively. This evaluates images whose background association opposes the predominant training association; it does not modify test images or create a new test split. The original test split can be evaluated in full and the two groups reported separately, or filtered to these same $2{,}897$ images before evaluation.

\paragraph{Image preprocessing.}
During encoder training, we apply a random resized crop to $224\times224$ pixels, with crop area sampled from $[0.8,1.0]$ of the original image area, followed by a random horizontal flip with probability $0.5$. For feature extraction, testing, and saliency computation, we resize the shorter image side to $256$ pixels and take a $224\times224$ centre crop. RGB values are converted to tensors and normalised using channel means $(0.485,0.456,0.406)$ and standard deviations $(0.229,0.224,0.225)$.

\paragraph{Encoder and training objective.}
The encoder is a randomly initialised ResNet-18~\citep{he2016deep}, with no pretrained weights. Its original classifier is replaced by a linear projection from the $512$-dimensional pooled backbone output to a $128$-dimensional feature vector. The projection weights use Xavier uniform initialisation and its bias is initialised to zero. Each output feature is normalised to satisfy $\|\mathbf z_i\|_2=1$. The backbone and projection are trained jointly using the negative coding-rate reduction, with equal weights on the expansion and compression terms. The compression term is computed separately for the two bird classes within each minibatch and weighted by their minibatch proportions. The coding precision is $\epsilon=1$. In the code, \texttt{eps} denotes the denominator parameter corresponding to $\epsilon^2$ in Equation~\eqref{eq:sample}; both equal one here. A diagonal term $10^{-6}\mathbf I$ is added when computing each log-determinant.

\paragraph{Optimisation settings.}
The selected script defaults to $50$ training epochs, a minibatch size of $128$, and AdamW with learning rate $3\times10^{-4}$ and weight decay $10^{-4}$. Training data are shuffled each epoch, and the final incomplete minibatch is dropped. The total gradient norm is clipped to $10$. Although a cosine learning-rate scheduler is instantiated, it is never stepped in the supplied training loop, so the effective learning rate remains constant. The PyTorch and NumPy random seeds are set to $42$, and data loading uses four workers by default. The final encoder checkpoint is saved after training.

\paragraph{Linear-probe evaluation.}
After each training epoch, the encoder is fixed and placed in evaluation mode. We extract features from all training images using the deterministic preprocessing above and fit a fresh binary logistic-regression classifier to these features and their bird labels. Its weight and bias are initialised to zero. The probe minimises the mean binary cross-entropy plus $\frac{\lambda}{2}\|\mathbf w\|_2^2$, where $\lambda=2\times10^{-3}$; the bias is not regularised. Optimisation uses L-BFGS with learning rate $1$, at most $200$ iterations, and a strong Wolfe line search. Predictions are waterbird when the logit is positive and landbird otherwise. Probe fitting does not update the encoder.

Accuracy is computed separately within each $(y,p)$ test group. For the two background-mismatched groups in Table~\ref{tab:waterbirds-test}, overall accuracy is the proportion of correct predictions among their combined $2{,}897$ images, rather than the unweighted mean of the two group accuracies. The implementation records test metrics after each epoch but does not implement checkpoint selection based on test accuracy.

\paragraph{Saliency computation.}
For a deterministically preprocessed training image $\mathbf x$, saliency is computed from the encoder outputs rather than the linear-probe logits. At each pixel $(u,v)$, the default map is
\begin{equation}
S(u,v)=\max_{c\in\{1,2,3\}}
\sum_{k=1}^{128}\left|
\frac{\partial z_k(\mathbf x)}{\partial x_{c,u,v}}
\right|.
\end{equation}
Thus, absolute input gradients are summed over all feature dimensions and then maximised over RGB channels. Map values are clipped at their $99.95$th percentile and rescaled to $[0,1]$, then overlaid on the image with opacity $0.45$. The script generates maps for the first ten training images of each class in dataset order; Figure~\ref{fig:MCR2-spurious} displays one example from each class.

\paragraph{Software and execution.}
The implementation uses PyTorch and torchvision for the model and data pipeline, NumPy for numerical utilities, Pillow for image loading, and Matplotlib for saliency visualisation. It automatically uses CUDA when available and otherwise runs on the CPU. The supplementary package specifies compatible dependency ranges rather than the exact software versions of the historical run; it does not include hardware specifications, historical run logs, or pretrained checkpoints.